\newif\ifreport\reporttrue
\colorlet{blue}{black}
\newtheorem{theorem}{Theorem}
\newtheorem{lemma}{Lemma}
\newtheorem{definition}{Definition}
\newcommand{\ignore}[1]{{}}
\newcommand{\yangc}[1]{\textcolor{red}{\textbf{Lei: #1}}}
\newcommand{\yan}[1]{\ifbool{inccomment}{{\color{blue}#1}}{}}
\newcommand{\yanc}[1]{\textcolor{blue}{\textbf{Feng: #1}}}
\begin{document}
\title{The Age of Correlated Features in Supervised Learning based Forecasting}
\author{
\IEEEauthorblockN{Md Kamran Chowdhury Shisher\IEEEauthorrefmark{1}, Heyang Qin\IEEEauthorrefmark{2}, Lei Yang\IEEEauthorrefmark{2}, Feng Yan\IEEEauthorrefmark{2}, \text{and} Yin Sun\IEEEauthorrefmark{1}}
\IEEEauthorblockA{\IEEEauthorrefmark{1}Department of ECE, Auburn University, AL, USA}
\IEEEauthorblockA{ \IEEEauthorrefmark{2}Department of CSE, University of Nevada, Reno, NV, USA}
\thanks{This work was supported in part by NSF grants CCF-1813050, IIS-1838024, EEC-1801727, CNS-1950485, CCF-1756013, and ONR grant N00014-17-1-2417.}
}


\maketitle
\begin{abstract}
In this paper, we analyze the impact of information freshness on supervised learning based forecasting. In these applications, a neural network is trained to predict a time-varying target (e.g., solar power), based on multiple correlated features (e.g., temperature, humidity, and cloud coverage). The features are collected from different data sources and are subject to heterogeneous and time-varying ages. By using an information-theoretic approach, we prove that the minimum training loss is a function of the ages of the features, where the function is not always monotonic. However, if the empirical distribution of the training data is close to the distribution of a Markov chain, then the training loss is approximately a non-decreasing age function. Both the training loss and testing loss depict similar growth patterns as the age increases. An experiment on solar power prediction is conducted to validate our theory. Our theoretical and experimental results suggest that it is beneficial to (i) combine the training data with different age values into a large training dataset and jointly train the forecasting decisions for these age values, and (ii) feed the age value as a part of the input feature to the neural network. 
\end{abstract}

\ignore{\yanc{it might be useful to have the structure and logic of intro. Below is an initial draft of the story-line, please feel free to improve it. \\ 
Importance and general background of supervised learning. \\
Importance and general background of AoI in supervised learning.   \\
Literature review of single feature AoI for supervised learning.  \\
Why single feature AoI is not sufficient? \\
Who correlated features AoI is challenge? \\
Our key insights and solutions. \\
Use temp and solar applications for case study and the preliminary results.  \\
Potential future work and the profound influence for supervised learning. \\
}}

\section{Introduction}

Recently, the proliferation of artificial intelligence and cyber physical systems has engendered a significant growth in machine learning techniques for time-series forecasting applications, such as autonomous driving \cite{ pedestrian_intent, autonomous_driving}, energy forecasting  \cite{energy_forecast, kaur2020energy, solar}, and traffic prediction \cite{traffic_forecast}. In these applications, a predictor (e.g., a neural network) is used to infer the status of a time-varying target (e.g., solar power) based on several features (e.g., temperature, humidity, and cloud coverage). Fresh features are  desired, because they could potentially lead to a better forecasting performance. For example, recent studies on pedestrian intent prediction \cite{pedestrian_intent} and autonomous driving \cite{autonomous_driving} showed that prediction accuracy can be greatly improved if fresher data is used. Similarly, it was found in \cite{kaur2020energy, solar} that the performance of energy forecasting degrades as the observed feature becomes stale. This phenomenon has also been observed in other applications of time-series forecasting, such as traffic control \cite{traffic1} and financial trading \cite{stock}.

\emph{Age of information (AoI)}, or simply \emph{age}, is a performance metric that measures the freshness of the information that a receiver has about the status of a remote source \cite{kaul2012real}. Recent research efforts on AoI have been focused on analyzing and optimizing the AoI in, e.g., communication networks \cite{sun2017, kaul2011minimizing, sun2019, zhou2018optimal, arafa2019timely,yates2015lazy, energy_harvesting}, remote estimation \cite{sun2017remote, orneesampling}, and control systems \cite{klugel2019aoi, BarasControl}. However, the impact of AoI on supervised learning based forecasting has not been well-understood, despite its significance in a broad range of applications. Recently, an \emph{age of features} concept was studied in \cite{Kamran}, where a stream of features are collected progressively from a single data source and, at any time, only the freshest feature is used for prediction. Meanwhile, in many applications, the forecasting decision is made by jointly using multiple features that are collected in real-time from different data sources (e.g., temperature readings from  thermometers and wind speed measured from  anemometers). These features are of diverse measurement frequency, data formats, and may be received through separated communication channels. Hence, their AoI values are different. This motivated us to analyze the performance of supervised learning algorithms for time-series forecasting, where the features are subject to heterogeneous and time-varying ages. The main contributions of this paper are summarized as follows: 
\begin{figure}
\centering
\includegraphics[width=90mm]{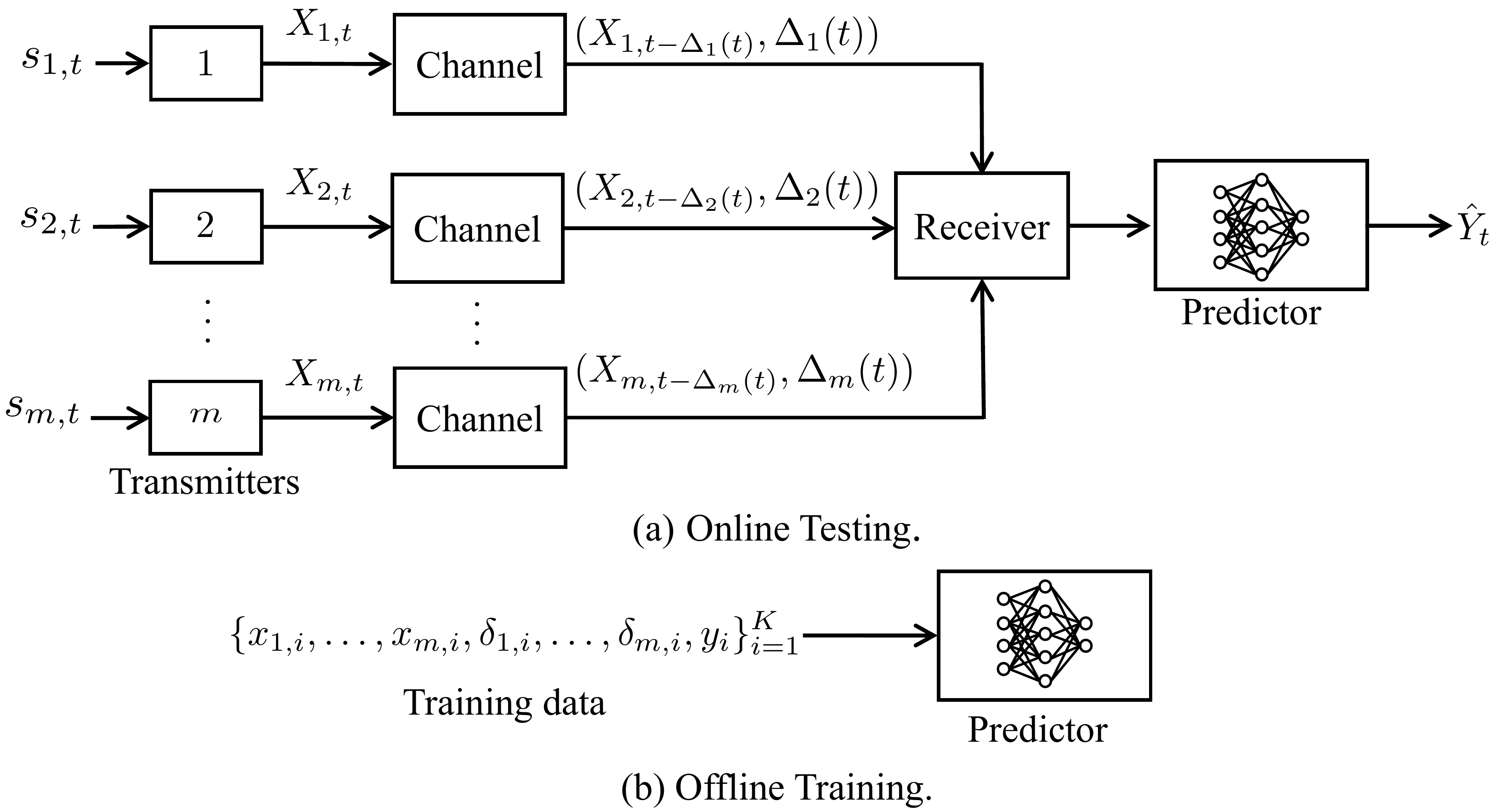}
\caption{System Model.}
\label{System_model}
\end{figure}

\begin{itemize}[noitemsep,leftmargin=*]
    \item We present an information theoretic approach to interpret the influence of age on supervised learning based forecasting. Our  analysis shows that the minimum training loss is a multi-dimensional function of the age vector of the features, but the function is not necessarily monotonic. This is a key difference from the non-decreasing age metrics considered in earlier work, e.g., \textcolor{blue}{\cite{sun2019, kosta2017age, klugel2019aoi} and the references therein.} 
    
    \item Moreover, by using a local information geometric analysis, we prove that if the empirical distribution of training data samples can be accurately approximated as a Markov chain, then the minimum training loss is close to a non-decreasing function of the age. The testing loss performance is analyzed in a similar way. 
  
    \item We compare the performance of several training approaches and find that it is better to (i) combine the training data with different age values into  a  large  training  dataset and jointly train the forecasting decisions for these age values, and (ii) add the age value as a part of the feature. This training approach has a lower computational complexity than the separated training approach used in \cite{time-series}, where the forecasting decision for each age value is trained individually. Experimental results on solar power prediction are provided to validate our  findings. 
\end{itemize}

\section{System Model}
Consider the learning-based time-series forecasting system in Fig. \ref{System_model},  which consists of $m$ transmitters and one receiver. \textcolor{blue}{The system time is slotted. Each transmitter} $l$ takes measurements from a discrete-time signal process $s_{l,t}$. The processes \textcolor{blue}{$s_{1,t},\ldots, s_{m,t}$} contain useful information for inferring the behavior of a target process $Y_t$. Transmitter $l$ progressively generates a sequence of features  $X_{l,t}$ from the process $s_{l,t}$. Each feature $X_{l,t}=f(s_{l,t-\tau}, s_{l,t-1-\tau},\ldots,$ $ s_{l,t-b+1-\tau})$ is a function of a finite-length time sequence from the process $s_{l,t}$, where $b$ is the length of the sequence and $\tau$ is the processing time needed for creating the feature. The processes \textcolor{blue}{$s_{1,t},\ldots, s_{m,t}$} may be correlated with each other, and so are the features \textcolor{blue}{$X_{1,t},\ldots, X_{m,t}$.} The features are sent from the transmitters to the receiver through one or multiple channels. The receiver \textcolor{blue}{feeds the features to a predictor (e.g., a trained neural network), which infers the current target value $Y_t$.}



Due to transmission errors and random transmission time, freshly generated features may not be immediately delivered to the receiver. Let $G_{l,i}$ and $D_{l,i}$ be the creation time and delivered time of the $i$-th feature of the process $s_{l, t}$, respectively, \textcolor{blue}{such that} $G_{l,i} \leq G_{l, i+1}$ and $G_{l,i} \leq D_{l, i}$. Then, $U_l(t)=\max \{G_{l,i}: D_{l,i} \leq t\}$ is the creation time of the freshest feature that was \textcolor{blue}{generated} from process $s_{l,t}$ and has been delivered to the receiver by time $t$. At time $t$, the receiver uses the $m$ freshest delivered features $(X_{1, U_1(t)},\ldots, X_{m, U_m(t)})$, each from a transmitter, to predict the current target $Y_t$. The age of the features generated from process $s_{l,t}$ is defined as
\begin{align}
\Delta_l(t) = t- U_l(t)=t-\max \{G_{l,i}: D_{l,i} \leq t\},
\end{align} 
which is the time elapsed since the creation time $U_l(t)$ of the freshest delivered feature $X_{l, U_l(t)}$ up to the current time $t$. If $\Delta_l(t)$ is small, then there exists a fresh delivered feature that was created recently from process $s_{l,t}$. The evolution of $\Delta_l(t)$ over time is illustrated in Fig. \ref{fig:age1}. 

The predictor is trained by \textcolor{blue}{using} an Empirical Risk Minimization (ERM) based supervised learning algorithm, such as logistic regression and \textcolor{blue}{linear regression}. A supervised learning algorithm consists of two phases: \emph{offline training} and \emph{online testing}. \textcolor{blue}{In offline training phase, the predictor is trained by minimizing an expected loss function under the empirical distribution of a training dataset.} Each entry $(x_{1,i},\ldots,x_{m,i},\delta_{1,i},\ldots,\delta_{m,i}, y_i)$ of the training dataset contains  $m$ features $(x_{1,i},$ $\ldots,x_{m,i})$, the age values of the features $(\delta_{1,i},\ldots,\delta_{m,i})$, and the target $y_i$. In Section \ref{sec_interpretation}, we will see that it is important to add the age values into training data. \ignore{\textcolor{blue}{As the standard approach of supervised learning, ERM selects the predictor which provides minimum empirical risk, where the empirical risk is the expected prediction error on the training data set.}}
In online testing, the trained predictor is used to predict the target in real-time, as explained above.

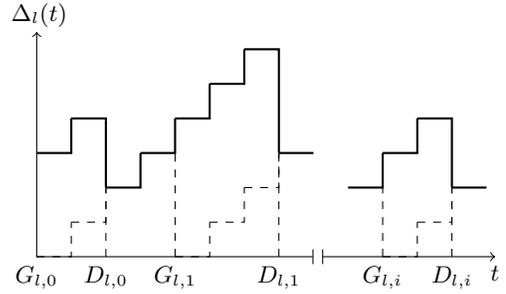
\begin{figure}
\centering
\begin{tikzpicture}[scale=0.23]
\draw [<-|] (0,13)  -- (0,0) -- (16,0);
\draw [|->] (16.5,0) -- (26.5,0) node [below] {\small$t$};
\draw (-2,14) node [right] {\small$\Delta_l(t)$};
\draw (0,0) node [below] {\small$G_{l,0}$};
\draw (4,0) node [below] {\small$D_{l,0}$};
\draw (8,0) node [below] {\small$G_{l,1}$};
\draw (14,0) node [below] {\small$D_{l,1}$};
\draw (20,0) node [below] {\small$G_{l,i}$};
\draw (24,0) node [below] {\small$D_{l,i}$};
\draw[ thick, domain=0:2] plot (\x, {6})  -- (2, {8});
 \draw [ thick, domain=2:4] plot (\x, {8})  -- (4, {4});
 \draw[ thick, domain=4:6] plot (\x, {4})  -- (6, {6});
  \draw[ thick, domain=6:8] plot (\x, {6})  -- (8, {8});
  \draw[ thick, domain=8:10] plot (\x, {8})  -- (10, {10});
  \draw[ thick, domain=10:12] plot (\x, {10})  -- (12, {12});
    \draw[ thick, domain=12:14] plot (\x, {12})  -- (14, {6});
    \draw[ thick, domain=14:16] plot (\x, {6}) ;
      \draw[ thick, domain=18:20] plot (\x, {4})  -- (20, {6});
      \draw[ thick, domain=20:22] plot (\x, {6})  -- (22, {8});
      \draw[ thick, domain=22:24] plot (\x, {8})  -- (24, {4});
      \draw[ thick, domain=24:26] plot (\x, {4}) ;
\draw[  thin,dashed,  domain=4:4] plot (\x, {0})-- (4, 4);
\draw[  thin,dashed,  domain=8:8] plot (\x, {0})-- (8, 6);
\draw[  thin,dashed,  domain=14:14] plot (\x, {0})-- (14, 6);
\draw[  thin,dashed,  domain=20:20] plot (\x, {0})-- (20, 4);
\draw[  thin,dashed,  domain=24:24] plot (\x, {0})-- (24, 4);

\draw[ thin,dashed, domain=0:2] plot (\x, {0})  -- (2, {2});
 \draw [ thin,dashed, domain=2:4] plot (\x, {2})  -- (4, {4});
 \draw[ thin,dashed, domain=8:10] plot (\x, {0})  -- (10, {2});
  \draw[ thin,dashed, domain=10:12] plot (\x, {2})  -- (12, {4});
    \draw[thin,dashed, domain=12:14] plot (\x, {4})  -- (14, {6});
    \draw[ thin,dashed, domain=20:22] plot (\x, {0})  -- (22, {2});
      \draw[ thin,dashed, domain=22:24] plot (\x, {2})  -- (24, {4});
\end{tikzpicture}
\caption{Evolution of the age $\Delta_l(t)$ in discrete-time.}
\label{fig:age1}
\vspace{-0.5cm}
\end{figure}

The goal of this paper is to interpret how the age processes $\Delta^m(t)=(\Delta_1(t),\ldots,\Delta_m(t))$ of the features affect the  performance of time-series forecasting.

\section{Performance of Supervised Learning From An Information Theoretic Perspective}\label{Information_Theoretic_interpretation}

In this section, we introduce 
several information theoretic measures that characterize the fundamental limits for the training and testing performance of supervised learning. Based on these information theoretic measures, the influence of information freshness on supervised learning based forecasting will be analyzed subsequently in Section \ref{sec_interpretation}.

Let $X^m = (X_1,\ldots, X_m)$ represent a vector of $m$ random features, which takes value $x^m = (x_1,\ldots, x_m)$ from the finite space $\mathcal X^m = \mathcal X_1\times \mathcal X_2 \times \ldots \times\mathcal X_m$. 
As the standard approach for supervised learning, ERM is a stochastic decision problem $(\mathcal X^m, \mathcal Y, \mathcal A, L)$, where a decision-maker predicts $Y \in \mathcal Y$ by taking an action $a=\psi(X^m) \in \mathcal A$ based on features $X^m \in \mathcal X^m$. The performance of ERM is measured by a loss function $L: \mathcal Y \times \mathcal A \mapsto \mathbb R$, where $L(y, a)$ is the incurred loss if action $a$ is chosen when $Y=y$. For example, $L$ is a logarithmic function $L_{\log}(y, P_Y) = -\log P_Y (y)$ in logistic regression and a quadratic function $L_2(y,\hat y) = (y -\hat y)^2$ in linear regression. 
Let $P_{X^m,Y}$ and $P_{\tilde X^m,\tilde Y}$, respectively, denote the empirical distributions of the training data and testing data, where $\tilde X^m$ and $\tilde Y$ are random variables with a joint distribution $P_{\tilde X^m,\tilde Y}$. We restrict our analysis in which the marginals $P_{X^m}$ and $P_{\tilde X^m}$ are strictly positive. 

\subsection{Minimum Training Loss}

The objective of training in ERM-based supervised learning is to solve the following problem:
\begin{align}\label{cond_entropy}
H_L(Y|X^m)= \min_{\psi \in \Psi} \mathbb E_{X^m, Y \sim P_{X^m, Y}}[L(Y,\psi(X^m))],
\end{align}
where $\Psi$ is the set of allowed decision functions and $H_L(Y|X^m)$ is the \emph{minimum training loss}. We consider a case that $\Psi$ contains \emph{all} functions from $\mathcal X^m$ to $\mathcal A$. Such a choice of $\Psi$ is of particular interest for two reasons: (i) Since $\Psi$ is quite large, $H_L(Y|X^m)$ provides a fundamental lower bound of the training loss for any ERM based learning algorithm. \textcolor{blue}{(ii) When $\Psi$ contains all functions from $\mathcal X^m$ to $\mathcal A$, Problem \eqref{cond_entropy} can be reformulated as 
\begin{align}\label{eq_decompose}
&H_L(Y|X^m) \nonumber\\
= & \min_{\substack{\psi(x^m)\in \mathcal A, \\
\forall~x^m\in \mathcal X^m} } \sum_{x^m \in \mathcal X^m} \!\! P_{X^m}(x^m) \mathbb{E}_{Y\sim P_{Y|X^m=x^m}}[L(Y,\psi(x^m))] \nonumber\\
=& \sum_{x^m \in \mathcal X^m} \!\! P_{X^m}(x^m) \min_{\psi(x^m)\in\mathcal A} \mathbb{E}_{Y\sim P_{Y|X^m=x^m}}[L(Y,\psi(x^m))], 
\end{align}
where, in the last step, the training problem is decomposed into a sequence of separated optimization problems, each optimizing an action $\psi(x^m)$ for given $x^m \in \mathcal X^m$.} For the considered $\Psi$, $H_L(Y| X^m)$ in (2) is termed the generalized conditional entropy of $Y$ given $X^m$ \cite{David_Tse}.
Similarly, the {generalized (unconditional) entropy} $H_L(Y)$ is defined as \cite{David_Tse, DawidC, Dawid}
\begin{align}\label{eq_entropy}
H_L(Y) = \min_{a\in\mathcal A} \mathbb{E}_{Y\sim P_{Y}}[L(Y,a)].
\end{align}
The optimal solution to \eqref{eq_entropy} is called \emph{Bayes action}, which is denoted as $a_{P_Y}$. If the Bayes actions are not unique, one can pick any such Bayes action as $a_{P_Y}$ \cite{dawid2014theory}. 
The generalized conditional entropy for $Y$ given $X^m=x^m$ is \cite{David_Tse}
\begin{align}\label{eq_cond_entropy}
H_L(Y| X^m=x^m)=&\min_{a\in\mathcal A} \mathbb{E}_{Y\sim P_{Y|X^m=x^m}}[L(Y,a)] \nonumber\\
=&\min_{\psi(x^m)\in\mathcal A} \mathbb{E}_{Y\sim P_{Y|X^m=x^m}}[L(Y,\psi(x^m))].
\end{align}
Substituting \eqref{eq_cond_entropy} into \eqref{eq_decompose}, yields the relationship
\begin{align}\label{eq_cond_entropy1}
H_L(Y| X^m)=\!\!\sum_{x^m \in \mathcal X^m} \!\! P_{X^m}(x^m) \ H_L(Y| X^m=x^m).
\end{align}
{\textcolor{blue}{We assume that entropy and conditional entropy discussed in this paper are bounded.}} We also need to define the generalized mutual information, which is given by
\begin{align}\label{MI}
I_L(Y; X^m)=H_L(Y)-H_L(Y|X^m).
\end{align}
Examples of the loss function $L$ and the associated generalized entropy $H_L(Y)$ were discussed in \cite{Kamran, David_Tse, Dawid, DawidC}. 

\subsection{Testing Loss}
The testing loss, also called validation loss, of supervised learning is the expected loss on testing data using the trained predictor. In the sequel, we use the concept of generalized cross entropy to characterize the \emph{testing loss}. The generalized cross entropy between $Y$ and $\tilde Y$ is defined as 
\begin{align}
H_L(\tilde Y; Y)= \mathbb{E}_{Y \sim P_{\tilde Y}}\left[L(Y, a_{P_Y})\right],
\end{align}
where $a_{P_Y}$ is the Bayes action defined above. 
Similarly, the generalized conditional cross entropy between $\tilde Y$ and $Y$ given $\tilde X^m=x^m$ is 
\begin{align}\label{cond_cross}
H_L(\tilde Y; Y|\tilde X^m=x^m)= \mathbb{E}_{Y \sim P_{\tilde Y|\tilde X^m=x^m}}\left[L(Y, a_{P_{Y|X^m=x^m}})\right],
\end{align}
where $a_{P_{Y|X^m=x^m}}$ is the Bayes action of a predictor that was trained by using the empirical conditional distribution $P_{Y|X^m=x^m}$ of the training data and $P_{\tilde Y|\tilde X^m=x^m}$ is the empirical conditional distribution of the testing data. 
Using \eqref{cond_cross}, the testing loss of supervised learning can be expressed as
\begin{align}\label{TestError}
\!\!\!\!\!H_L(\tilde Y; Y| \tilde X^m)=\!\!\!\sum_{x^m \in \mathcal X^m}\!\! \tilde P_{\tilde X^m}(x^m) H_L(\tilde Y; Y| \tilde X^m=x^m),\!\!\!\!\!
\end{align}
which is also termed the generalized conditional cross entropy between $\tilde Y $ and $Y$ given $\tilde X^m$.

\section{Impact of Information Freshness on Supervised Learning based Forecasting}\label{sec_interpretation}
In this section, we analyze the training and testing loss performance of supervised learning under different age values. It is shown that the minimum training loss is a function of the age, but the function is not always monotonic. By using a local information geometric approach, we provide a sufficient condition under which the training loss can be closely approximated as a non-decreasing age function. Similar conditions for the monotonicity of the testing loss on age are also discussed. 
\subsection{Training Loss under Constant AoI}\label{Constant_AoI}
\textcolor{blue}{For ease of understanding, we start by} analyzing the minimum training loss under a constant AoI, i.e, $\Delta^m(t)=\delta^m$, for all time $t$. The more practical case of time-varying AoI will be studied subsequently in Section \ref{Dynamic_AoI}.

Markov chain has been a widely-used model in \textcolor{blue}{time-series analysis} \cite{Solar2017, ProbabilisticMarkov, Stockmarkov}. Define $X_{t-\tau^m}^m = (X_{1,t-\tau_1}, \ldots, \textcolor{blue}{X_{m,t-\tau_m}})$ for $\tau^m = (\tau_1, \ldots, \tau_m)$, \textcolor{blue}{where $X_{l,t-\tau_l}$ is the feature generated from transmitter $l$ at time $t-\tau_l$}. If $Y_t \leftrightarrow X^m_{t-\tau^m} \leftrightarrow X^m_{t-\tau^m-\mu^m}$ is a Markov chain for all $\mu^m, \tau^m \geq 0$ (assume the Markov chain is also stationary), then by using the data processing inequality  \cite{DawidC}, one can show that the minimum training loss $H_L(Y_t|X^m_{t-\delta^m})$ is 
a non-decreasing function of the age vector $\delta^m$. 
\textcolor{blue}{However, practical time-series signals are rarely Markovian \cite{Non-Markovremarks, non-Markovnoise,selectiveMarkov, lstmNon-Markov} and, as a result, the minimum training loss $H_L(Y_t|X^m_{t-\delta^m})$ is not always a monotonic function of $\delta^m$.}

To develop a unified \textcolor{blue}{framework} for analyzing the minimum training loss $H_L(Y_t|X^m_{t-\delta^m})$, we consider a relaxation of the Markov chain model, called \emph{$\epsilon$-Markov chain}, which was proposed recently in \cite{Kamran}. 
\begin{definition}
$\epsilon$-\textbf{Markov Chain:} \cite{Kamran} Assume that the distributions $P_{Y|X}, P_{Z|X},$ and $P_X$ are strictly positive. Given $\epsilon \geq 0$, a sequence of random variables $Y, X,$ and $Z$ is said to be an $\epsilon$-Markov chain, denoted as $Y \xleftrightarrow{\epsilon} X \xleftrightarrow{\epsilon} Z$, \textcolor{blue}{if
\begin{align}\label{epsilon-Markov-def}
D_{\chi^2}\left(P_{Y,X,Z} || P_{Y|X} P_{Z|X} P_X \right) \leq \epsilon^2,
\end{align}
where
$D_{\chi^2}(P_Y||Q_Y)$ is Neyman's $\chi^2$-divergence, given by
\begin{align}\label{chi-divergence}
D_{\chi^2}(P_Y ||Q_Y) = \sum_{y \in \mathcal{Y}} \frac{(P_Y(y) - Q_Y(y))^2}{Q_Y(y)}.
\end{align}}
\end{definition}
\textcolor{blue}{The inequality  \eqref{epsilon-Markov-def} can be also expressed as 
\begin{align}
I_{\chi^2}(Y;Z|X) \leq \epsilon^2,
\end{align}
where $I_{\chi^2}(Y;Z|X)$ is the $\chi^2$-conditional mutual information.} If $\epsilon=0$, then $Y \xleftrightarrow{\epsilon} X \xleftrightarrow{\epsilon} Z$ reduces to a Markov chain. Hence, $\epsilon$-Markov chain is more general than Markov chain. 

\textcolor{blue}{For $\epsilon$-Markov chain, a relaxation of the data processing inequality is provided in the following lemma:
\begin{lemma}[$\epsilon$-data processing inequality] \cite{Kamran} \label{Lemma_CMI}
If $Y \xleftrightarrow{\epsilon} X \xleftrightarrow{\epsilon} Z$ is an $\epsilon$-Markov chain and $H_L(Y)$ for the loss function $L$ is twice differentiable in $P_Y$, then we have (as $\epsilon \rightarrow 0)$
\begin{align}
I_L(Y; Z | X) &=O(\epsilon^2),\\
H_L(Y|X) &\leq H_L(Y|Z)+O(\epsilon^2).
\end{align}
\end{lemma}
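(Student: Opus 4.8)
The plan is to combine a local (second-order) expansion of the generalized entropy with two elementary structural properties of $H_L$. First I would record these properties. Since $H_L(P_Y)=\min_{a\in\mathcal A}\sum_{y}P_Y(y)L(y,a)$ is a pointwise minimum of functionals linear in $P_Y$, it is concave in $P_Y$, and by an envelope (Danskin) argument its gradient, wherever the Bayes action is locally well-defined, is $\partial H_L(P_Y)/\partial P_Y(y)=L(y,a_{P_Y})$; twice differentiability then supplies a locally bounded, negative-semidefinite Hessian $\nabla^2 H_L$. Concavity also gives ``conditioning reduces entropy'', i.e. $H_L(Y|X,Z)\le H_L(Y|X)$ and $H_L(Y|X,Z)\le H_L(Y|Z)$, by applying Jensen's inequality to $H_L$ along the total-probability mixtures $P_{Y|X=x}=\sum_z P_{Z|X=x}(z)P_{Y|X=x,Z=z}$ and $P_{Y|Z=z}=\sum_x P_{X|Z=z}(x)P_{Y|X=x,Z=z}$.

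Next I would turn the $\epsilon$-Markov hypothesis \eqref{epsilon-Markov-def} into a pointwise bound. Substituting $P_{Y,X,Z}(y,x,z)=P_X(x)P_{Z|X=x}(z)P_{Y|X=x,Z=z}(y)$ into the $\chi^2$-divergence shows that the left-hand side of \eqref{epsilon-Markov-def} equals $I_{\chi^2}(Y;Z|X)=\sum_{x,z}P_X(x)P_{Z|X=x}(z)\,D_{\chi^2}(P_{Y|X=x,Z=z}\|P_{Y|X=x})\le\epsilon^2$. Because the alphabets are finite and $P_X,P_{Z|X},P_{Y|X}$ are strictly positive, every weight $P_X(x)P_{Z|X=x}(z)$ exceeds a fixed positive constant; combined with the elementary bound $D_{\chi^2}(Q\|P)\ge\|Q-P\|_2^2$, this gives $\|\Delta_{x,z}\|_2=O(\epsilon)$ for each $(x,z)$, where $\Delta_{x,z}:=P_{Y|X=x,Z=z}-P_{Y|X=x}$, while still $\sum_{x,z}P_X(x)P_{Z|X=x}(z)\|\Delta_{x,z}\|_2^2\le\epsilon^2$.

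The core step is a second-order Taylor expansion of $H_L$ about $P_{Y|X=x}$,
\[
H_L(P_{Y|X=x,Z=z})=H_L(P_{Y|X=x})+\langle\nabla H_L(P_{Y|X=x}),\Delta_{x,z}\rangle+\tfrac12\langle\Delta_{x,z},\nabla^2H_L(P_{Y|X=x})\,\Delta_{x,z}\rangle+o(\|\Delta_{x,z}\|_2^2),
\]
with $\langle\cdot,\cdot\rangle$ the Euclidean inner product over $y$. Since $\sum_z P_{Z|X=x}(z)\Delta_{x,z}=0$, averaging over $z$ annihilates the linear term exactly, so the per-$x$ gap $H_L(Y|X=x)-\sum_z P_{Z|X=x}(z)H_L(P_{Y|X=x,Z=z})$ equals $-\tfrac12\sum_z P_{Z|X=x}(z)\langle\Delta_{x,z},\nabla^2H_L(P_{Y|X=x})\Delta_{x,z}\rangle+o(\epsilon^2)$; it is nonnegative because $-\nabla^2H_L\succeq0$, and — using boundedness of $\nabla^2H_L$ near the interior point $P_{Y|X=x}$ and $\|\Delta_{x,z}\|_2^2\le D_{\chi^2}(P_{Y|X=x,Z=z}\|P_{Y|X=x})$ — it is at most $O(1)\sum_z P_{Z|X=x}(z)D_{\chi^2}(P_{Y|X=x,Z=z}\|P_{Y|X=x})+o(\epsilon^2)$. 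Averaging over $x$ with weights $P_X(x)$ yields $0\le I_L(Y;Z|X)\le O(1)\,I_{\chi^2}(Y;Z|X)+o(\epsilon^2)=O(\epsilon^2)$, which is the first claim. The second then follows from the chain rule together with ``conditioning reduces entropy'': $H_L(Y|X)=H_L(Y|X,Z)+I_L(Y;Z|X)\le H_L(Y|Z)+O(\epsilon^2)$.

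The main obstacle is making the Taylor remainder uniform in $(x,z)$: I need the $o(\|\Delta_{x,z}\|_2^2)$ errors to sum to a genuine $o(\epsilon^2)$, which is precisely why the strict-positivity assumption matters — it upgrades the $\chi^2$-averaged smallness of $\{\Delta_{x,z}\}$ to $\|\Delta_{x,z}\|_2=O(\epsilon)$ for each individual $(x,z)$ — and why $\nabla^2H_L$ must be continuous near each interior point $P_{Y|X=x}$. A secondary technicality is legitimizing the envelope-theorem gradient identity and the Jensen/concavity inequalities for the generalized (loss-dependent) entropy $H_L$ rather than for Shannon entropy; this is exactly where the hypothesis that $H_L$ be finite and twice differentiable in $P_Y$ is used.
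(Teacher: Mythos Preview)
The paper does not actually prove Lemma~\ref{Lemma_CMI}; it is imported wholesale from the earlier reference \cite{Kamran} and used as a black box in the proofs of Theorems~\ref{theorem1}--\ref{theorem3}. So there is no ``paper's own proof'' to compare against here.

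That said, your argument is the natural one and almost certainly the one intended in \cite{Kamran}: the paper explicitly describes its approach as a ``local information geometric analysis,'' which is precisely a second-order Taylor expansion of $H_L$ around the reference conditional $P_{Y|X=x}$, exploiting concavity of $H_L$ (so that $-\nabla^2 H_L\succeq 0$) and the vanishing of the first-order term under the mixture $\sum_z P_{Z|X=x}(z)\Delta_{x,z}=0$. Your handling of the remainder --- using strict positivity of $P_X,P_{Z|X}$ on a finite alphabet to upgrade the averaged $\chi^2$ bound to pointwise $\|\Delta_{x,z}\|_2=O(\epsilon)$, so that finitely many $o(\|\Delta_{x,z}\|_2^2)$ terms sum to $o(\epsilon^2)$ --- is exactly the right way to make the local expansion rigorous, and your derivation of the second inequality via $H_L(Y|X)=H_L(Y|X,Z)+I_L(Y;Z|X)\le H_L(Y|Z)+O(\epsilon^2)$ is clean. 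I see no genuine gap.
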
}

By using Lemma \ref{Lemma_CMI}, the training loss performance of supervised learning based forecasting is characterized in the following theorem. For notational simplicity, the theorem is presented for the case of $m=2$ features, whereas it can be easily generalized to any positive integer value of $m$. 

\begin{theorem}\label{theorem1}
Let $\{(X_{1, t}, X_{2, t}, Y_t), t \geq 0\}$ be a stationary stochastic process. 
\begin{itemize}
\item[(a)] The minimum training loss $H_L(Y_t |X_{1, t-\delta_1}, X_{2, t-\delta_2})$ is a function of $\delta_1$ and $\delta_2$, determined by
\begin{align}\label{eMarkov}
\!\!\!\!\!\!H_L(Y_t |X_{1, t-\delta_1}, X_{2, t-\delta_2})= f_1(\delta_1, \delta_2)-f_2(\delta_1, \delta_2),\!\!
\end{align}
where $f_1(\delta_1, \delta_2) $ and $f_2(\delta_1, \delta_2)$ are two non-decreasing functions of $\delta_1$ and $\delta_2$, given by
\begin{align}\label{functionf_1}
&f_1(\delta_1, \delta_2)\! \nonumber\\
=&H_L(Y_t| X_{1,t}, X_{2,t}) \nonumber\\
& +\sum_{k=0}^{\delta_1-1} I_L(Y_t; X_{1, t-k}, X_{2, t-\delta_2} | X_{1, t-k-1}, X_{2, t-\delta_2}) \nonumber\\
& +\sum_{k=0}^{\delta_2-1} I_L(Y_t; X_{1, t}, X_{2, t-k} | X_{1, t}, X_{2, t-k-1}),\\
\label{functionf_2}
&f_2(\delta_1, \delta_2)\! \nonumber\\
=&\sum_{k=0}^{\delta_1-1} I_L(Y_t; X_{1, t-k-1}, X_{2, t-\delta_2} | X_{1, t-k}, X_{2, t-\delta_2}) \nonumber\\
&+\sum_{k=0}^{\delta_2-1} I_L(Y_t; X_{1, t}, X_{2, t-k-1} | X_{1, t}, X_{2, t-k}).
\end{align}
\item[(b)] If $H_L(Y)$ is twice differentiable in $P_Y$, and $$Y_t \xleftrightarrow{\epsilon} (X_{1, t-\tau_1}, X_{2, t-\tau_2}) \xleftrightarrow{\epsilon} (X_{1, t-\tau_1-\mu_1}, X_{2, t-\tau_2-\mu_2})$$ is an $\epsilon$-Markov chain for all $\mu_l, \tau_l\geq 0$, then  $f_2(\delta_1, \delta_2) = O(\epsilon^2) $ and hence (as $\epsilon \rightarrow 0$)
\begin{align}\label{eMarkov1}
    H_L(Y_t |X_{1, t-\delta_1}, X_{2, t-\delta_2})=f_1(\delta_1, \delta_2)+O(\epsilon^2).
\end{align}
\end{itemize}
\end{theorem}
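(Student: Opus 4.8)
The plan is to derive part~(a) from a telescoping chain-rule expansion of the generalized conditional entropy, and then to obtain part~(b) by feeding each term of that expansion into the $\epsilon$-data processing inequality (Lemma~\ref{Lemma_CMI}).

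For part~(a), the one identity I would use over and over is that, for feature vectors $W$ and $V$,
\begin{align}
H_L(Y_t\,|\,W)=H_L(Y_t\,|\,W,V)+I_L(Y_t;V\,|\,W),\qquad I_L(Y_t;V\,|\,W):=H_L(Y_t\,|\,W)-H_L(Y_t\,|\,W,V)\ge 0,
\end{align}
the inequality being immediate from the decomposition \eqref{eq_decompose}, because the minimum of a convex combination of losses is at least the convex combination of the per-outcome minima. I would then walk from $(X_{1,t},X_{2,t})$ to $(X_{1,t-\delta_1},X_{2,t-\delta_2})$ one time-step at a time: first slide the second coordinate back from $t$ to $t-\delta_2$ (first coordinate held at $t$), then slide the first coordinate back from $t$ to $t-\delta_1$ (second coordinate held at $t-\delta_2$). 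In a single step that replaces $X_{l,t-j}$ by $X_{l,t-j-1}$ while keeping the other conditioning variables $R$ fixed, apply the identity above twice, once with $(W,V)=((R,X_{l,t-j-1}),X_{l,t-j})$ and once with $(W,V)=((R,X_{l,t-j}),X_{l,t-j-1})$; the two expansions share the common term $H_L(Y_t\,|\,R,X_{l,t-j},X_{l,t-j-1})$, so
\begin{align}
H_L(Y_t\,|\,R,X_{l,t-j-1})-H_L(Y_t\,|\,R,X_{l,t-j})=I_L(Y_t;X_{l,t-j}\,|\,R,X_{l,t-j-1})-I_L(Y_t;X_{l,t-j-1}\,|\,R,X_{l,t-j}).
\end{align}
Summing over all steps telescopes the left-hand side to $H_L(Y_t\,|\,X_{1,t-\delta_1},X_{2,t-\delta_2})-H_L(Y_t\,|\,X_{1,t},X_{2,t})$; gathering the ``forward'' terms $I_L(Y_t;X_{l,t-j}\,|\,\cdot)$ into $f_1$ (together with $H_L(Y_t\,|\,X_{1,t},X_{2,t})$) and the ``backward'' terms $I_L(Y_t;X_{l,t-j-1}\,|\,\cdot)$ into $f_2$, and writing out $R$ in each case (note that $X_{2,t-\delta_2}$, resp.\ $X_{1,t}$, already sits in $R$, so re-listing it is harmless), reproduces \eqref{eMarkov}--\eqref{functionf_2} exactly. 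Every summand of $f_1$ and of $f_2$ is a non-negative generalized conditional mutual information, and increasing $\delta_1$ appends only further non-negative summands; the non-decreasing dependence on $\delta_2$ needs slightly more care, since $\delta_2$ also appears inside the conditioning of the $\delta_1$-indexed sum. Stationarity enters only to make all of these quantities functions of $(\delta_1,\delta_2)$ alone.

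For part~(b), take a generic summand of $f_2$, e.g.\ $I_L(Y_t;X_{1,t-k-1},X_{2,t-\delta_2}\,|\,X_{1,t-k},X_{2,t-\delta_2})$ from the first sum; the second-sum terms are symmetric. Because a generalized conditional mutual information does not increase when its measured argument is shrunk, this summand is at most $I_L(Y_t;X_{1,t-k-1},X_{2,t-\delta_2-1}\,|\,X_{1,t-k},X_{2,t-\delta_2})$, and it is of course $\ge 0$. The $\epsilon$-Markov hypothesis with $\tau_1=k$, $\tau_2=\delta_2$, $\mu_1=\mu_2=1$ says precisely that $Y_t\xleftrightarrow{\epsilon}(X_{1,t-k},X_{2,t-\delta_2})\xleftrightarrow{\epsilon}(X_{1,t-k-1},X_{2,t-\delta_2-1})$, so Lemma~\ref{Lemma_CMI} (its twice-differentiability hypothesis being in force) makes that upper bound $O(\epsilon^2)$. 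Hence every summand of $f_2$ is $O(\epsilon^2)$, and as $f_2(\delta_1,\delta_2)$ is a sum of $\delta_1+\delta_2$ such summands, $f_2(\delta_1,\delta_2)=O(\epsilon^2)$; substituting into \eqref{eMarkov} gives \eqref{eMarkov1}.

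I expect the main obstacle to be bookkeeping rather than any single hard estimate: steering the two-index telescoping so it lands exactly on the stated $f_1$ and $f_2$ (all the ``common'' joint-entropy terms cancelling, the forward/backward split coming out clean), and, relatedly, pinning down the non-decreasing dependence of $f_1,f_2$ on $\delta_2$, which is the one monotonicity claim not visible term-by-term. On the information-geometry side, the small things to check are that the elementary manipulations used in part~(b)---dropping a conditioning variable that is re-listed among the measured variables, and monotonicity of $I_L$ under shrinking the measured argument---remain valid for the \emph{generalized} quantities $H_L,I_L$, and that the index choice used for the $\epsilon$-Markov chain keeps all the relevant conditional laws strictly positive. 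Given these, part~(b) is a short corollary of part~(a) and Lemma~\ref{Lemma_CMI}.
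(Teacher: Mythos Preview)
Your proposal is correct and follows essentially the same route as the paper: the same one-step identity (two chain-rule expansions sharing the joint term $H_L(Y_t\mid R,X_{l,t-j},X_{l,t-j-1})$), the same telescoping path $(t,t)\to(t,t-\delta_2)\to(t-\delta_1,t-\delta_2)$, and the same appeal to Lemma~\ref{Lemma_CMI} for part~(b).

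Two small remarks. First, in part~(b) your detour through $\mu_1=\mu_2=1$ and the monotonicity of $I_L$ in its measured argument is unnecessary: the hypothesis is stated for all $\mu_l\ge 0$, so you may take $\mu_2=0$ (resp.\ $\mu_1=0$) and apply Lemma~\ref{Lemma_CMI} directly to each summand of $f_2$, exactly as the paper does. Second, for the $\delta_2$-monotonicity of $f_1$ (and analogously $f_2$) that you correctly flag as ``not visible term-by-term,'' the paper's device is simply to rerun the telescoping along the \emph{other} path $(t,t)\to(t-\delta_1,t)\to(t-\delta_1,t-\delta_2)$, which yields an alternative expression for $f_1$ in which the $\delta_2$-sum has conditioning $X_{1,t-\delta_1}$ and the $\delta_1$-sum has conditioning $X_{2,t}$; in that form the monotonicity in $\delta_2$ is again term-by-term.
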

\ifreport
\begin{proof}
See Appendix \ref{Ptheorem1}.
\end{proof}
\else
Due to space limitation, all the proofs are relegated to our technical report \cite{technical_report}.
\fi

\textcolor{blue}{According to Theorem \ref{theorem1}, the minimum training loss $H_L(Y_t|X^m_{t-\delta^m})$ is a function of the age vector $\delta^m$. In addition, $H_L(Y_t|X^m_{t-\delta^m})$ is the difference between two non-decreasing functions of $\delta^m$. Furthermore, the monotonicity of $H_L(Y_t|X^m_{t-\delta^m})$ is characterized by the parameter $\epsilon$ in the $\epsilon$-Markov chain $Y_t \xleftrightarrow{\epsilon} X^m_{t-\tau^m} \xleftrightarrow{\epsilon} X^m_{t-\tau^m-\mu^m}$. If $\epsilon$ is small, then the empirical distribution of training data samples can be accurately approximated as a Markov chain. As a result, the term $f_2(\delta^m)$ is close to zero and $H_L(Y_t|X^m_{t-\delta^m})$ tends to be a non-decreasing function of $\delta^m$. On the other hand, for large $\epsilon$, $H_L(Y_t|X^m_{t-\delta^m})$ is unlikely monotonic in $\delta^m$.}

\textcolor{blue}{As depicted later in Figs. \ref{constantAoI1}-\ref{constantAoI2}, the training loss can indeed be non-monotonic on $\delta^m$. This finding suggests that it is beneficial to investigate non-monotonic age penalty functions, which are more general than the non-decreasing age penalty metrics studied in, e.g., \cite{sun2019,kosta2017age, klugel2019aoi}.}

\subsection{Training Loss under Dynamic AoI}\label{Dynamic_AoI}
In practice, the AoI $\Delta^m(t)$ varies dynamically over time, as shown in Fig. \ref{fig:age1}. Let $P_{\Delta^m}$ denote the empirical distribution of the AoI in the training dataset and $\Delta^m$ be a random vector with the distribution $P_{\Delta^m}$. In the case of dynamic AoI, there are two approaches for training: (i) \emph{Separated training}: the Bayes action for the AoI value $\Delta^m= \delta^m$ is trained by only using the training data samples with AoI value $\delta^m$ \cite{time-series}. The minimum training loss of separate training for $\Delta^m=\delta^m$ is $H_L(Y_t|X^m_{t-\delta^m})$, which has been analyzed in Section \ref{Constant_AoI}. (ii) \emph{Joint training}: the training data samples of all AoI values are combined into a large training dataset and the Bayes actions for different AoI values are trained together. In joint training, the AoI value can  either be included as part of the feature, or be excluded from the feature. If the AoI value is included in the training data (i.e., as part of the feature), the minimum training loss of joint training is $H_L(Y_t| X^m_{t-\Delta^m}, \Delta^m)$. If the AoI value is excluded from the training data, the minimum training loss of joint training is $H_L(Y_t |X^m_{t-\Delta^m})$. Because conditioning reduces the generalized entropy \cite{David_Tse, DawidC}, we have
\begin{align}\label{Dynamic_cod_entropy}
    H_L(Y_t |X^m_{t-\Delta^m}, \Delta^m) \leq H_L(Y_t |X^m_{t-\Delta^m}).
\end{align}
Hence, a smaller training loss can be achieved by including the AoI in the feature. Moreover, similar to \eqref{eq_cond_entropy1}, one can get 
\begin{align}\label{Dynamic_cod_entropy2}
    H_L(Y_t |X^m_{t-\Delta^m}, \Delta^m)=\sum_{\delta^m} P_{\Delta^m}(\delta^m) H_L(Y_t |X^m_{t-\delta^m}).
\end{align}
Therefore, the minimum training loss of joint training is simply the expectation of the training loss of separated training. Our experiment results show that joint training can have a much smaller computational complexity than separated training (see the discussions in Section V-D). Therefore, we suggest to use joint training and add AoI into the feature.

The results in Theorem \ref{theorem1} can be directly generalized to the scenario of dynamic AoI. In particular, if the age processes of two experiments, denoted by subscripts $c$ and $d$, satisfy a sample-path ordering\footnote{ We say $x^m \leq z^m$, if $x_l \leq z_l$ for every $l=1, 2, \ldots, m$. } 
\begin{align}\label{eq_sampleOrdering}
    \Delta_c^m(t) \leq \Delta_d^m(t),~\forall~t,    
\end{align}
then, similar to \eqref{eMarkov1}, one can obtain (as $\epsilon \rightarrow 0$)
\begin{align}\label{dynamicsoln}
    H_L(Y_t|X^m_{t-\Delta^m_c}, \Delta^m_c) \leq H_L(Y_t|X^m_{t-\Delta^m_d}, \Delta^m_d)+O(\epsilon^2).
\end{align}
Next, we show that \eqref{dynamicsoln} can be also proven under a weaker stochastic ordering condition \eqref{eq_stochastic_cond}.
\begin{definition}
\textbf{Univariate Stochastic Ordering:}\cite{stochasticOrder} A random variable $X$ is said to be stochastically smaller than another random variable $Z$, denoted as $X \leq_{st} Z$, if
\begin{align}
    P(X>x) \leq P(Z>x), \ \ \forall x \in \mathbb R.
\end{align}
\end{definition}
\begin{definition}
\textbf{Multivariate Stochastic Ordering:}\cite{stochasticOrder} \textcolor{blue}{A set $U \subseteq \mathbb R^m$ is called upper if $z^m \in U$ whenever $z^m \geq x^m$ and $x^m \in U$.} A random vector $X^m$ is said to be stochastically smaller than another random vector $Z^m$, denoted as $X^m \leq_{st} Z^m$, if
\begin{align}
    P(X^m \in U) \leq P(Z^m \in U), \ \ \ \text{for all upper sets}~ U \subseteq \mathbb R^m.
\end{align}
\end{definition}
\begin{theorem}\label{theorem2}
If $\{(X^m_t, Y_t), t \geq 0\}$ is a stationary random process, $Y_t \xleftrightarrow{\epsilon} X^m_{t-\tau^m} \xleftrightarrow{\epsilon} X^m_{t-\tau^m-\mu^m}$ is an $\epsilon$-Markov chain for all $\mu^m, \tau^m \geq 0$, $H_L(Y)$ is twice differentiable in $P_Y$, and the empirical distributions of the training datasets in two experiments $c$ and $d$ satisfy 
\begin{align}\label{eq_stochastic_cond}
  \Delta^m_{c} \leq_{st} \Delta^m_{d},
\end{align}
then \eqref{dynamicsoln} holds.
\end{theorem}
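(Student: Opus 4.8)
The plan is to reduce Theorem~\ref{theorem2} to the monotone-function characterization of multivariate stochastic order, using the decomposition of the minimum training loss supplied by Theorem~\ref{theorem1}. First I would apply the $m$-feature generalization of Theorem~\ref{theorem1}: under the stated stationarity, $\epsilon$-Markov, and differentiability hypotheses, the constant-AoI training loss admits the decomposition $H_L(Y_t|X^m_{t-\delta^m}) = f_1(\delta^m) - f_2(\delta^m)$, where $f_1$ is coordinatewise non-decreasing in $\delta^m$ and $f_2(\delta^m) = O(\epsilon^2)$, hence $H_L(Y_t|X^m_{t-\delta^m}) = f_1(\delta^m) + O(\epsilon^2)$ as $\epsilon \to 0$.

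Next I would substitute this into the identity \eqref{Dynamic_cod_entropy2}, which expresses the joint-training loss as the average of the constant-AoI losses over the empirical AoI distribution, to obtain
\begin{align*}
&H_L(Y_t|X^m_{t-\Delta^m_c}, \Delta^m_c) = \mathbb{E}[f_1(\Delta^m_c)] + O(\epsilon^2), \\
&H_L(Y_t|X^m_{t-\Delta^m_d}, \Delta^m_d) = \mathbb{E}[f_1(\Delta^m_d)] + O(\epsilon^2).
\end{align*}
It then remains to show $\mathbb{E}[f_1(\Delta^m_c)] \leq \mathbb{E}[f_1(\Delta^m_d)]$. For this I would invoke the standard equivalent characterization of multivariate stochastic order \cite{stochasticOrder}: $\Delta^m_c \leq_{st} \Delta^m_d$ if and only if $\mathbb{E}[g(\Delta^m_c)] \leq \mathbb{E}[g(\Delta^m_d)]$ for every non-decreasing $g:\mathbb{R}^m \to \mathbb{R}$ whose expectations exist. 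Applying this with $g = f_1$ (non-decreasing by Theorem~\ref{theorem1}(a)) yields the desired inequality, and combining the three displays gives \eqref{dynamicsoln}. As a side remark, this also subsumes the earlier claim, since the sample-path ordering \eqref{eq_sampleOrdering} implies \eqref{eq_stochastic_cond} by a coupling argument.

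The step I expect to require the most care is controlling the $O(\epsilon^2)$ term after averaging. Theorem~\ref{theorem1}(b) gives $f_2(\delta^m) = O(\epsilon^2)$ only for each fixed $\delta^m$, and $f_2(\delta^m)$ is a sum of $\|\delta^m\|_1$ generalized conditional mutual information terms (each $O(\epsilon^2)$ by Lemma~\ref{Lemma_CMI}), so the hidden constant grows with $\delta^m$. To legitimately write $\mathbb{E}[f_2(\Delta^m_j)] = O(\epsilon^2)$ for $j\in\{c,d\}$, I would use that the empirical AoI distribution $P_{\Delta^m}$ of any finite training dataset is supported on a finite set of age vectors, which bounds $\|\Delta^m_j\|_1$ and makes the average of the per-term bounds $O(\epsilon^2)$; I would state this finiteness explicitly. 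The same finiteness, together with the paper's standing assumption that the generalized entropies are bounded, also guarantees that $f_1$ is integrable against $P_{\Delta^m_j}$, so the stochastic-order characterization applies with no further integrability caveats.
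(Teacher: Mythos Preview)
Your proposal is correct and follows essentially the same route as the paper's proof: substitute the $m$-feature version of \eqref{eMarkov1} into \eqref{Dynamic_cod_entropy2}, then use the monotone-function characterization of $\leq_{st}$ applied to $f_1$ to compare the two expectations. Your explicit discussion of why the averaged $O(\epsilon^2)$ term remains $O(\epsilon^2)$ and why $f_1$ is integrable is more careful than the paper, which simply appeals to the boundedness of the generalized (conditional) entropies to justify existence of the expectations.
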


\ifreport
\begin{proof}
See Appendix \ref{Ptheorem2}.
\end{proof}
\else
\fi

\textcolor{blue}{According to Theorem \ref{theorem2}, if $\Delta_c^m$ is stochastically smaller than $\Delta_d^m$, then the minimum training loss of joint training in Experiment $c$ is approximately smaller than that in Experiment $d$, where the approximation error is of the order $O(\epsilon^2)$.}

\subsection{Testing Loss under Dynamic AoI}\label{Testing_Error}
Let $\mathcal{P^Y}$ denote the space of distributions on $\mathcal{Y}$ and relint($\mathcal{P^Y}$) denote the relative interior of $\mathcal{P^Y}$, i.e., the subset of strictly positive distributions.
\begin{definition}$\beta$-\textbf{neighborhood:}\cite{Lizhong}
Given $\beta \geq 0$, the \emph{$\beta$-neighborhood} of a reference distribution $Q_Y \in relint(\mathcal{P^Y})$ is the set of distributions that are in a Neyman's $\chi^2$-divergence ball of radius $\beta^2$ centered on $Q_Y,$ i.e.,
\begin{align}
\mathcal{N^Y_\beta}(Q_Y) = \left\{ P_Y \in \mathcal{P^Y} : D_{\chi^2}(P_Y || Q_Y) \leq \beta^2\right\}.
\end{align}
\end{definition}
 
\begin{theorem}\label{theorem3}
Let $\{(X^m_t, Y_t), t \geq 0\}$ be a stationary stochastic process.
\begin{itemize}
\item[(a)] If the empirical distributions of training data and testing data are close to each other such that \ignore{$(\tilde X^m_{t-\tilde \Delta^m}, \tilde \Delta^m)$ has the same distribution as $(X^m_{t-\Delta^m}, \Delta^m)$ and}
    \begin{align}\label{T3condition2}
    D_{\chi^2}\left(P_{\tilde Y_t, \tilde X^m_{t-\tilde \Delta^m}, \tilde \Delta^m}||P_{Y_t, X^m_{t-\Delta^m}, \Delta^m}\right) \!\! \leq \beta^2,
    \end{align}
    then the testing loss is close to the minimum training loss, i.e., 
    \begin{align}\label{Eq_Theorem3a}
        H_L(\tilde Y_t; Y_t| \tilde X^m_{t-\tilde \Delta^m}, \tilde \Delta^m)=&H_L(Y_t|X^m_{t-\Delta^m}, \Delta^m)+O(\beta),
    \end{align}
provided that the testing loss is bounded.
\item[(b)] In addition, if $H_L(Y)$ is twice differentiable in $P_Y$, $Y_t \xleftrightarrow{\epsilon} X^m_{t-\tau^m} \xleftrightarrow{\epsilon} X^m_{t-\tau^m-\mu^m}$ is an $\epsilon$-Markov chain for all $\mu^m, \tau^m \geq 0$, and the empirical distributions of the testing datasets in two experiments $c$ and $d$ satisfy $\tilde \Delta^m_{c} \leq_{st} \tilde \Delta^m_{d}$, then the corresponding testing loss satisfies 
    \begin{align}
        H_L(\tilde Y_t; Y_t| \tilde X^m_{t-\tilde \Delta^m_c}, \tilde \Delta^m_c) \leq& H_L(\tilde Y_t; Y_t| \tilde X^m_{t-\tilde \Delta^m_d}, \tilde \Delta^m_d) \nonumber\\
        &+O(\max\{\epsilon^2, \beta\}).
    \end{align}
\end{itemize}
\end{theorem}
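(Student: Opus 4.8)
The plan is to prove part~(a) by a Cauchy--Schwarz estimate that exploits the fact that $\sqrt{D_{\chi^2}}$ behaves like a weighted $\ell_2$ distance, and to prove part~(b) by chaining part~(a) with the monotonicity results Theorem~\ref{theorem2} and Theorem~\ref{theorem1}(b), carefully accumulating the two sources of approximation error.

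For part~(a), abbreviate $W^m_t = (X^m_{t-\Delta^m}, \Delta^m)$ and $\tilde W^m_t = (\tilde X^m_{t-\tilde\Delta^m}, \tilde\Delta^m)$. By the decomposition \eqref{TestError}--\eqref{cond_cross} of the testing loss and \eqref{eq_cond_entropy1} of the minimum training loss, both $H_L(\tilde Y_t; Y_t|\tilde W^m_t)$ and $H_L(Y_t|W^m_t)$ are expectations of the very same integrand $L(y, a_{P_{Y_t|W^m_t=w}})$ (the Bayes action always referring to the training conditional), the former under $P_{\tilde Y_t,\tilde W^m_t}$ and the latter under $P_{Y_t, W^m_t}$. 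Hence their difference equals $\sum_{y,w}\bigl(P_{\tilde Y_t,\tilde W^m_t}(y,w)-P_{Y_t,W^m_t}(y,w)\bigr)L(y, a_{P_{Y_t|W^m_t=w}})$, which, written as the inner product of $(P_{\tilde Y_t,\tilde W^m_t}-P_{Y_t,W^m_t})/\sqrt{P_{Y_t,W^m_t}}$ with $\sqrt{P_{Y_t,W^m_t}}\,L$ and estimated by Cauchy--Schwarz, has absolute value at most $\sqrt{D_{\chi^2}(P_{\tilde Y_t,\tilde W^m_t}||P_{Y_t,W^m_t})}$ times $\bigl(\sum_{y,w}P_{Y_t,W^m_t}(y,w)\,L(y,a_{P_{Y_t|W^m_t=w}})^2\bigr)^{1/2}$. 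The first factor is $\leq\beta$ by \eqref{T3condition2}, and the second is a finite constant under the boundedness hypothesis, which yields \eqref{Eq_Theorem3a}. (Since $D_{\chi^2}(P_{\tilde Y_t,\tilde W^m_t}||P_{Y_t,W^m_t})\leq\beta^2<\infty$, the support of the testing distribution is contained in that of the training distribution, so every term above is well defined.)

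For part~(b), apply part~(a) in experiments $c$ and $d$ separately to reduce the claim to comparing the two minimum training losses, i.e.\ to showing $H_L(Y_t|X^m_{t-\Delta^m_c},\Delta^m_c)\leq H_L(Y_t|X^m_{t-\Delta^m_d},\Delta^m_d)+O(\max\{\epsilon^2,\beta\})$. By \eqref{Dynamic_cod_entropy2} each side equals $\sum_{\delta^m}P_{\Delta^m_\ast}(\delta^m)\,h(\delta^m)$ with $h(\delta^m)=H_L(Y_t|X^m_{t-\delta^m})$, and the $\epsilon$-Markov hypothesis of part~(b) is exactly what Theorem~\ref{theorem1}(b) (generalized to $m$ features) needs in order to write $h(\delta^m)=f_1(\delta^m)+O(\epsilon^2)$ with $f_1$ non-decreasing. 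The only subtlety is that we are given the stochastic ordering $\tilde\Delta^m_c\leq_{st}\tilde\Delta^m_d$ of the \emph{testing} AoI distributions rather than of the training ones; I would bridge this by observing that \eqref{T3condition2} and the $\chi^2$ data-processing inequality force $D_{\chi^2}(P_{\tilde\Delta^m_\ast}||P_{\Delta^m_\ast})\leq\beta^2$ in each experiment, and then expanding $\sum_{\delta^m}\bigl(P_{\Delta^m_c}(\delta^m)-P_{\Delta^m_d}(\delta^m)\bigr)h(\delta^m)$ after inserting $\pm P_{\tilde\Delta^m_c}$ and $\pm P_{\tilde\Delta^m_d}$: the two ``swap'' terms are $O(\beta)$ by the Cauchy--Schwarz estimate of part~(a), replacing $h$ by $f_1$ costs $O(\epsilon^2)$, and the leftover $\sum_{\delta^m}\bigl(P_{\tilde\Delta^m_c}(\delta^m)-P_{\tilde\Delta^m_d}(\delta^m)\bigr)f_1(\delta^m)\leq 0$ because $f_1$ is non-decreasing and $\tilde\Delta^m_c\leq_{st}\tilde\Delta^m_d$. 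Summing the contributions gives the stated bound; combined with the two $O(\beta)$ reductions from part~(a), it yields the inequality of part~(b). If one additionally assumes that the training and testing AoI distributions coincide within each experiment, this bridging step is unnecessary and Theorem~\ref{theorem2} applies to the training losses verbatim.

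The main obstacle is the error bookkeeping in part~(b): one must correctly push the joint $\chi^2$-closeness hypothesis \eqref{T3condition2} down to the marginal AoI distributions via the $\chi^2$ data-processing inequality, and make sure the non-monotone $O(\epsilon^2)$ slack inside $h$ does not spoil the stochastic-ordering comparison---which is precisely why that comparison must be run on the genuinely monotone surrogate $f_1$ rather than on $h$ itself. The remaining ingredients (the Cauchy--Schwarz estimates, the boundedness assumption, and the support-containment remark) are routine once the inner-product viewpoint on $\sqrt{D_{\chi^2}}$ is in place.
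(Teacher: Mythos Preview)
Your proposal is correct, and in both parts it is cleaner than the paper's own argument.

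For part~(a), the paper routes through an auxiliary Lemma~\ref{lemma2}: it first shows that the $\chi^2$-closeness \eqref{T3condition2} forces the feature marginals to satisfy $P_{\tilde W}(w)=P_W(w)+O(\beta)$ and the conditionals to satisfy $P_{\tilde Y|\tilde W}(y|w)=P_{Y|W}(y|w)+O(\beta)$ separately, and then reassembles these into \eqref{Eq_Theorem3a} term by term. Your single Cauchy--Schwarz step on the joint is shorter and conceptually the same (it is exactly the ``inner-product'' reading of $\sqrt{D_{\chi^2}}$ that underlies the paper's lemma). One small caveat: your second factor is $\bigl(\mathbb E_{P_{Y,W}}[L(Y,a_{P_{Y|W}})^2]\bigr)^{1/2}$, a second-moment quantity, whereas the stated hypothesis only bounds the first moment (the testing loss). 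On the finite alphabets assumed throughout the paper this is harmless, but strictly speaking you are using a marginally stronger boundedness than what is written; the paper's Lemma~\ref{lemma2} proof is equally informal on this point.

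For part~(b), the paper's proof is two lines: it invokes Theorem~\ref{theorem2} and \eqref{Eq_Theorem3a} and sums the errors. You noticed something the paper glosses over: Theorem~\ref{theorem2} requires stochastic ordering of the \emph{training} AoI vectors $\Delta^m_c,\Delta^m_d$, while the hypothesis of part~(b) gives it for the \emph{testing} AoI vectors $\tilde\Delta^m_c,\tilde\Delta^m_d$. Your bridging step---push \eqref{T3condition2} to the AoI marginals via $\chi^2$ data processing, swap $P_{\Delta^m_\ast}\leftrightarrow P_{\tilde\Delta^m_\ast}$ at cost $O(\beta)$ with the same Cauchy--Schwarz estimate, and run the stochastic-ordering comparison on the monotone surrogate $f_1$ rather than on $h$---is precisely what is needed to make the paper's sketch rigorous, and your closing remark (that the bridge is unnecessary if training and testing AoI coincide in each experiment) is the tacit assumption under which the paper's short proof is literally correct.
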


\ifreport
\begin{proof}
See Appendix \ref{Ptheorem3}.
\end{proof}
\else
\fi

\textcolor{blue}{As shown in Theorem \ref{theorem3}, if the empirical distributions of training data and testing data are close to each other, then the minimum training loss and testing loss have similar growth patterns as the AoI grows.}

\ignore{\yangc{Need a transition to connect Sec. II with Sec. III.}
\yanc{I have the god feeling that a top-down approach might work better here -- let's first discuss the main findings and then show the detailed proof/analysis.}
\ignore{In section \ref{Constant_AoI}, training loss under a constant AoI, i.e., $\Delta^m(t) = \delta^m$ for all $t$, is studied. It is found that the training loss is a difference of two non-decreasing functions of $\delta^m$. Under some conditions, the training loss becomes close to a non-decreasing function of AoI values. In section \ref{Dynamic_AoI} and \ref{Testing_Error}, for time varying AoI $\Delta^m(t)$, some conditions are provided to show when fresh correlated features improves the training loss and the testing loss respectively.} 

\yanc{it might be better to blend in the metrics into later sections.}}
\begin{figure*}
\minipage[t]{0.32\textwidth}
\includegraphics[width=\columnwidth]{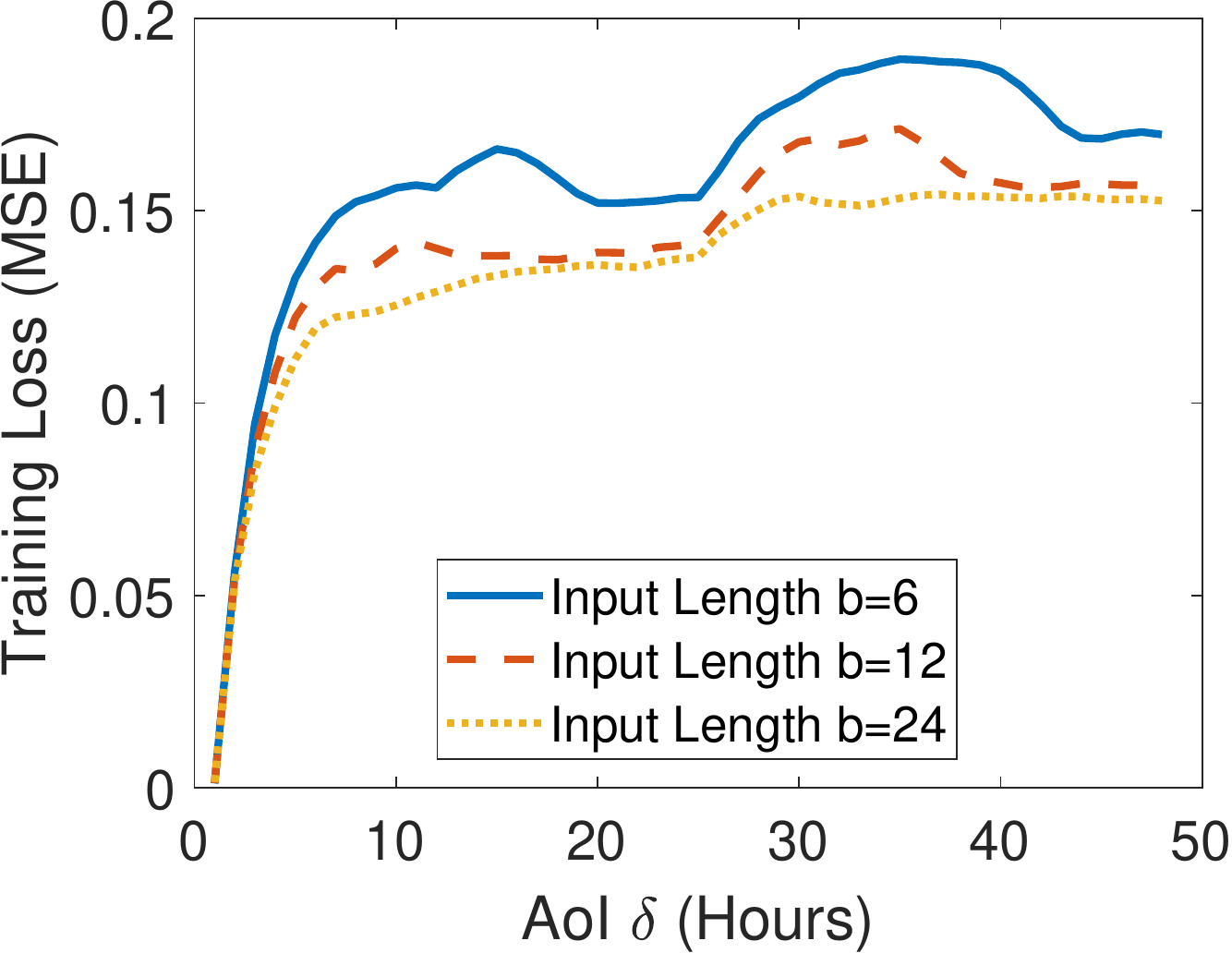}
\caption{Training loss vs. AoI $\delta_1 = \delta_2 = \delta$, where the input length is $b=6$, 12, and 24.}
\label{constantAoI1}
\endminipage\hfill
\minipage[t]{0.32\textwidth}
\includegraphics[width=\textwidth]{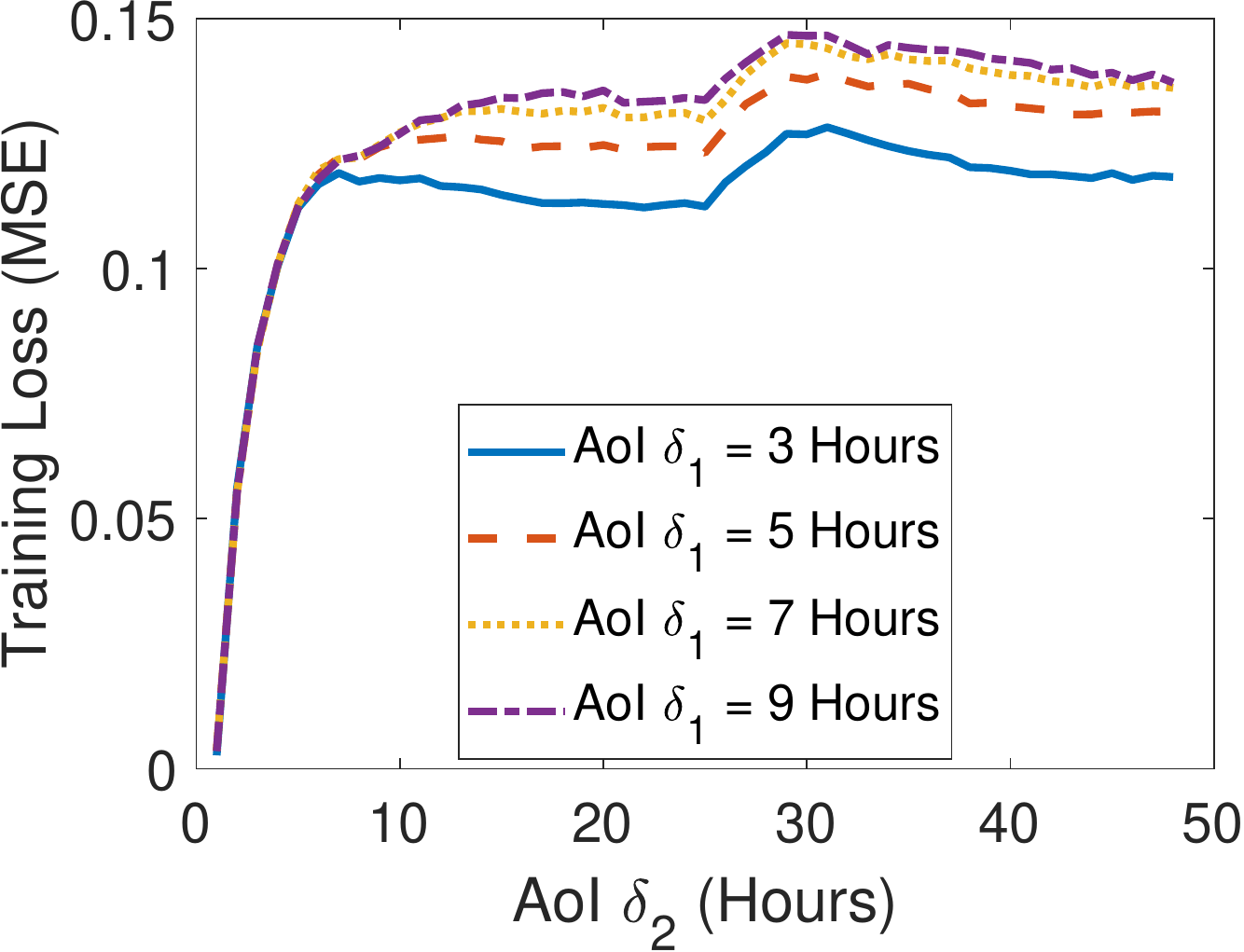}
\caption{Training loss vs. AoI $\delta_2$, where the input length is $b= 24$ and $\delta_1=$3, 5, 7, and 9 hours.}
\label{constantAoI2}
\endminipage\hfill
\minipage[t]{0.32\textwidth}
\includegraphics[width=\columnwidth]{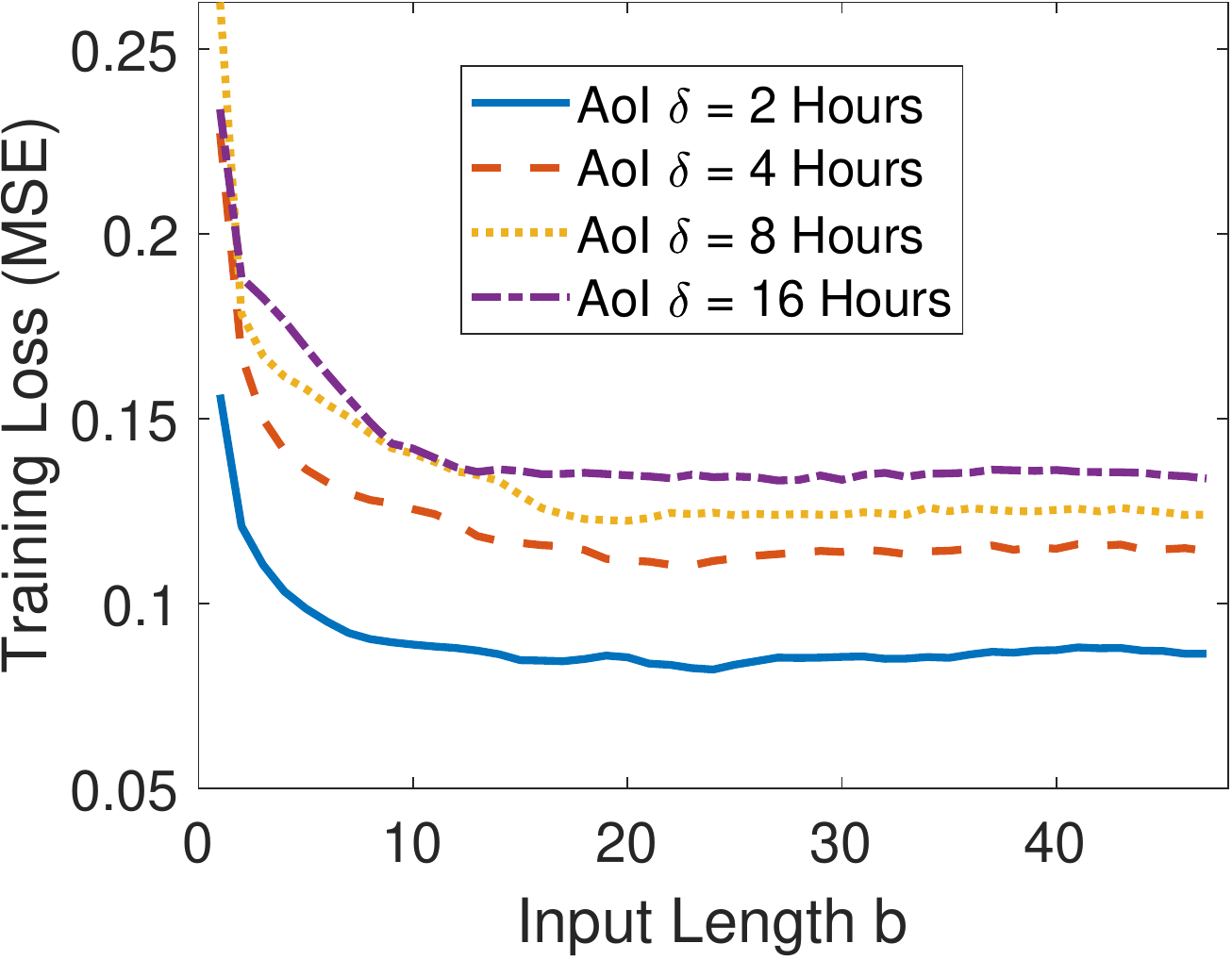}
\caption{Training loss vs. input length $b$, where the AoI is $\delta_1=\delta_2 = $2, 4, 8, and 16 hours.}
\label{fig:input_length}
\endminipage\hfill
\end{figure*}

\begin{figure*}
\minipage[t]{0.64\textwidth}
\includegraphics[width=\textwidth]{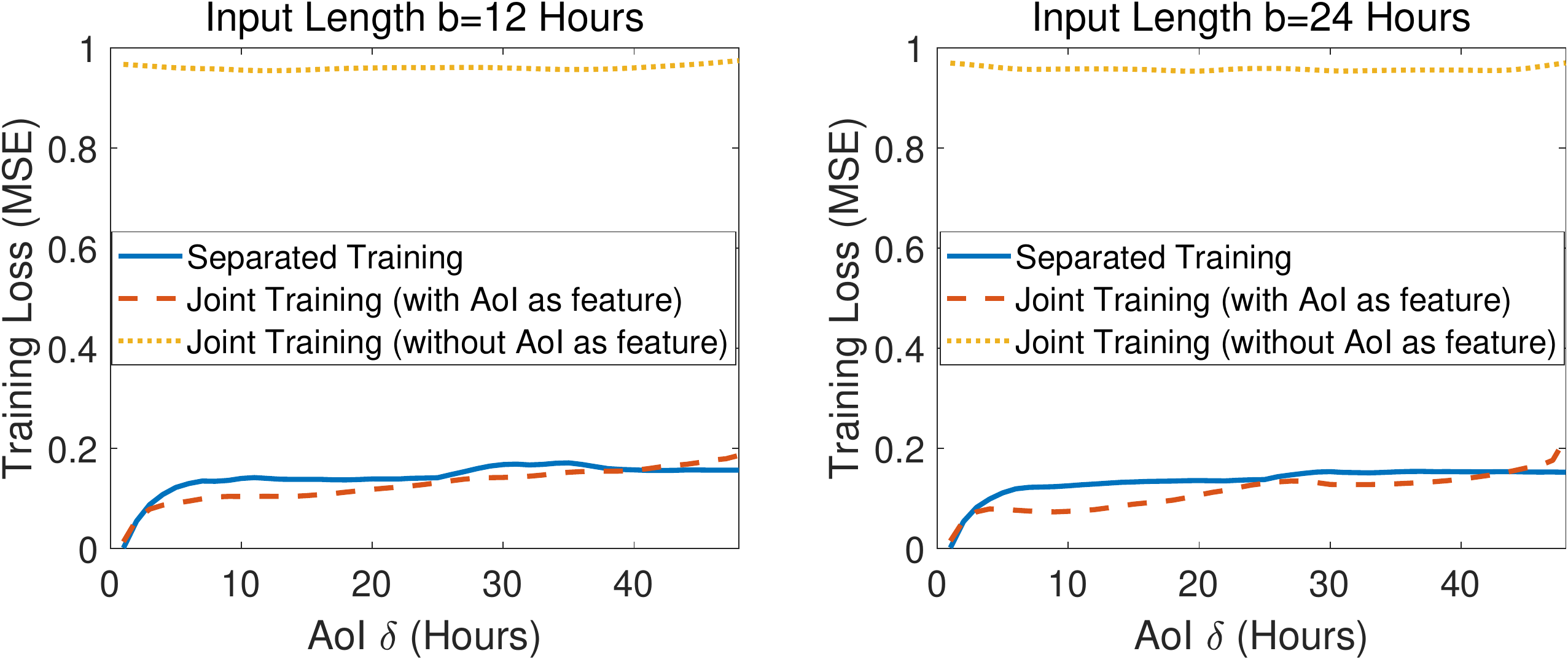}
\caption{Training loss under dynamic AoI $\Delta_1(t)=\Delta_2(t)$, where the input length is $b=12$ and 24. Three training approaches are illustrated: (i) separated training, (ii) joint training with AoI as part of feature, and (iii) joint training without using AoI as part of feature. }
\label{fig:joint_model}
\endminipage\hfill
\minipage[t]{0.32\textwidth}
\includegraphics[width=\columnwidth]{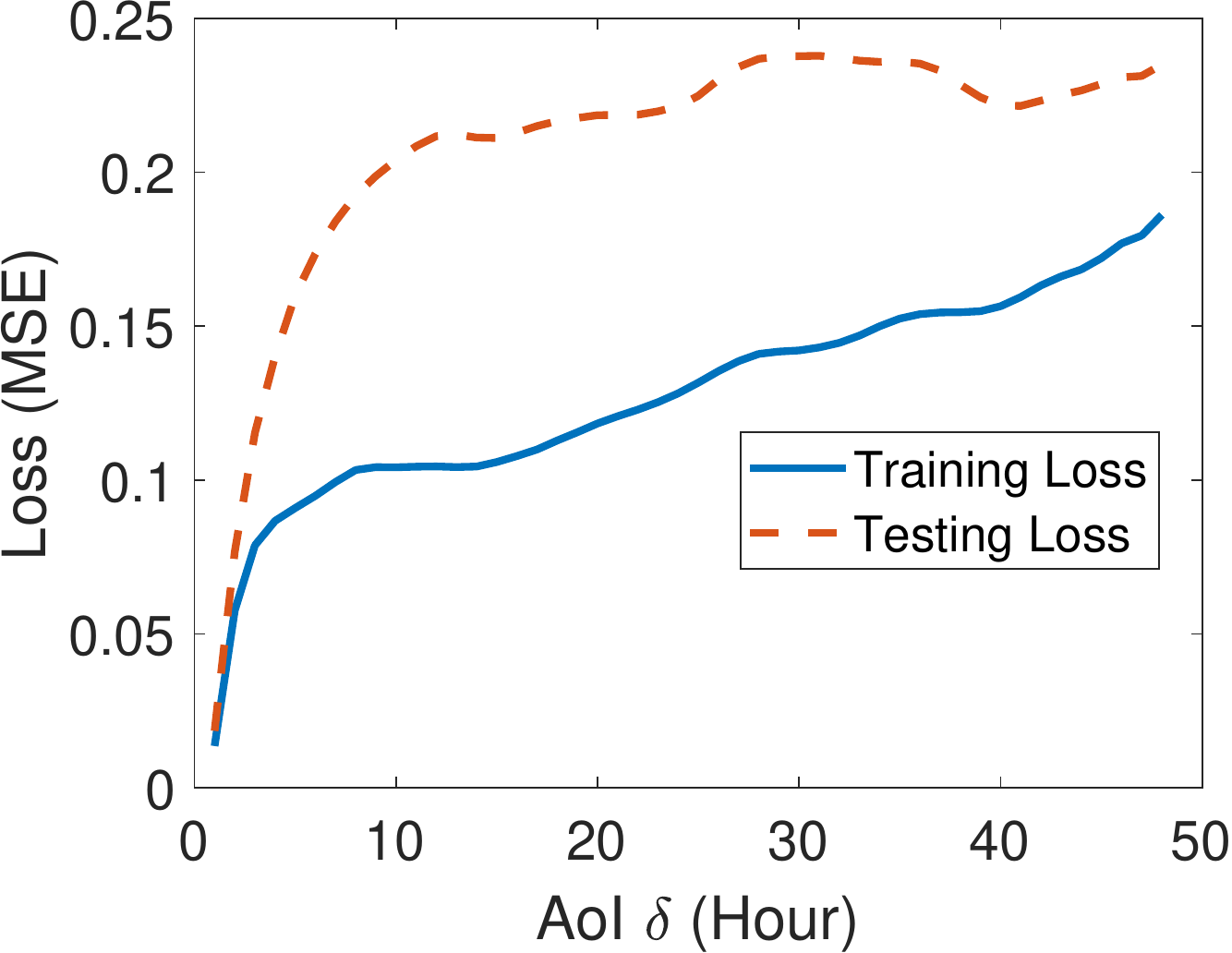}
\caption{Testing loss and training loss vs. dynamic AoI $\Delta_1(t)=\Delta_2(t)$, where the input length is $b=12$. }
\label{fig:test_loss}
\endminipage\hfill
\end{figure*}

\section{Case Studies}
We provide several case studies using a real-world solar power prediction task. A Long Short-Term Memory (LSTM) neural network is used as the prediction model.
Four cases are studied: (i) training loss under constant AoI, (ii) sensitivity analysis of input sequence length $b$ under constant AoI, (iii) training loss under dynamic AoI, and (iv) testing loss under dynamic AoI.

\subsection{Experimental Setup}

\subsubsection{Environment} We use Tensorflow 2\cite{tensorflow2015-whitepaper} on Ubuntu 16.04, and a server with two Intel Xeon E5-2630 v4 CPUs and four GeForce GTX 1080 Ti GPUs to perform the experiments.

\subsubsection{Dataset} A real-world dataset from the probabilistic solar power forecasting task of the Global Energy Forecasting Competition (GEFCom) 2014\cite{HONG2016896} is used for evaluation. The dataset contains 2 years of measurement data for 13 signal processes,
including humidity, thermal, wind speed and direction, and other weather metrics, as explained in \cite{HONG2016896}. Feature $X_{l,t}=(s_{l, t}, \ldots, s_{l,t-b+1})$ of signal $l$ is a time sequence of length $b$. The task is to predict the solar power at an hourly level. We have used two different AoI values $\delta_1$ and $\delta_2$, where 6 features are of the AoI $\delta_1$ and 7 features are of the AoI $\delta_2$. \textcolor{blue}{In jointly training, we use an aggregated dataset with samples of constant AoI $\delta_1=\delta_2$ ranging from 1 to 48 (hours).}
The first year of data is used for training and the second year of data is used for  testing. During preprocessing, both datasets are normalized such that the training dataset has a zero mean and a unity standard derivation. 

\subsubsection{Prediction Model} A Long Short-Term Memory (LSTM) neural network is used for prediction. It composes of one input layer, one hidden layer with 32 LSTM cells, and one fully-connected (dense) output layer. The object of training is to minimize \textcolor{blue}{the expected quadratic loss under the empirical distribution of the training samples}. In other words, empirical mean square error (MSE) $\frac{1}{K}\sum_{i=1}^{K}(y_i-\hat{y}_i)^2$ is minimized, where $K$ is the number of training data entries, $y_i$ is the actual label, and $\hat y_i$ is the predicted label. 
All the experimental results are averaged over multiple runs to reduce the randomness that occurs during the training of LSTM. 
This training setting and the hyper-parameters of Tensorflow 2 training algorithm are consistently used across all evaluations. Note that in theoretical analysis, we have considered that $\Psi$ consists of all possible functions from $\mathcal X^m$ to $\mathcal A$. In practice, neural networks cannot represent such a large function space and the trained weights of the neural network may not be globally optimal. Hence, the training loss \textcolor{blue}{(MSE)} in the experimental study is larger than the minimum training loss analyzed in our theory, but their patterns should be similar.  

\subsection{Training Loss under Constant AoI}
We start with the scenario of separated training, where the AoI is the same across all input features, i.e., $\delta_1=\delta_2=\delta$. As shown in Fig. \ref{constantAoI1}, the training loss is not a monotonic function of AoI $\delta$. \textcolor{blue}{Moreover, with the increase of input sequence length $b$, the training loss becomes close to a non-decreasing function. However, with the increase of input sequence length $b$, the training loss tends to be close to a non-decreasing function. This phenomenon can be interpreted by using Shannon's high-order Markov model for information sources\cite{shannon}. Specifically, the feature process $X_{l,t}$ can be approximated as a Markov chain model with order $b$, where the approximation error reduces as the order $b$ grows. Because of this, the training loss is far away from an increasing function of the AoI when $b$ = 6 or 12, and is nearly an increasing function of the AoI when $b$ = 24.} 

Next, we consider a more general case where $\delta_1$ and $\delta_2$ can have different values. The results are shown in Fig. \ref{constantAoI2}, where both $\delta_1$ and $\delta_2$ affect the training loss in accordance to Theorem \ref{theorem1}.
\ignore{Next, we consider that input features have two different AoI values $\delta_1$ and $\delta_2$ , which is a more practical case as we may experience heterogeneous staleness from various sources including communication, processing, preemptive resources allocation, etc. The results are shown in Fig. \ref{constantAoI2}, where both $\delta_1$ and $\delta_2$ contribute to the increase of training loss as discussed in \ref{Constant_AoI}.}

\subsection{Sensitivity Analysis of Input Sequence Length $b$ Under Constant AoI}
With the increase of input length sequence $b$, the training loss is expected to decrease, because conditioning reduces the generalized conditional entropy.
To show this, we evaluate the training loss for a wide range of input lengths, as shown in Fig. \ref{fig:input_length}. The observations are two-folded. First, for a given AoI value, the training loss is a non-increasing function of the input length $b$. Second, even with the increase of input length, a larger AoI leads to a larger training loss. The observed pattern agrees with our theoretical analysis.

\subsection{Training Loss under Dynamic AoI}
In the \emph{separated training} method considered above, one predictor (i.e., an LSTM neural network) is trained for every AoI value. Hence, a number of predictors are needed. Training these predictors may incur a huge computational cost, which would be impractical for prediction tasks with huge datasets. As discussed in Section \ref{Dynamic_AoI}, \emph{joint training} is a better approach, where a single predictor is trained by jointly using the input samples of different AoI values. As plotted in Fig. \ref{fig:joint_model}, if the AoI is excluded from the feature, \textcolor{blue}{joint training has a significant performance degradation compared to separated training as described in \eqref{Dynamic_cod_entropy}. However, with AoI as a part of the input feature, joint training has comparable performance as separated training, which agrees with the relationship in \eqref{Dynamic_cod_entropy2}. For a wide range of AoI values, the performance of joint training is slightly better than separated training because it uses all training data where separated predictor can only be trained on the data with certain AoI. This phenomenon is in alignment with data augmentation \cite{wong2016understanding}. In current training dataset for jointly training, we mainly focus on small AoIs so the joint model may not be as good as separated training for very large AoI values.} 
But such long AoI is rare in real-world applications. Thus, the idea of appending AoI to the input features is a good idea for time-varying AoI.

\subsection{Testing Loss under Dynamic AoI}
Training loss and testing loss  are compared in Fig. \ref{fig:test_loss} for joint training under dynamic AoI. One can observe that the testing loss is not monotonic in AoI, but it has a growing trend that is similar to the training loss. In addition, the testing loss is larger than the training loss, which is caused by the difference between the empirical distributions of training and testing datasets. Such a difference is quite normal in machine learning, which occurs, for instance, if the datasets for training/testing are not sufficiently large or if there is concept shift \cite{concept_shift}. 

\ignore{First, we validate theoretical results from Section \ref{Testing_Error} that the gap between training and testing performance should be close given similar distributions. Then we further validate that the testing performance of the prediction model would be under similar impact as the training performance from different AoI values. As shown in Fig. \ref{fig:test_loss}, the testing loss under different input lengths is close to their testing loss given small AoI values, suggesting that similar distributions do yield good testing loss. For larger AoI, there is a gap between training and testing performance, which is expected as the the training data are biased with staleness thus the model suffers more from concept shift. As the AoI grows, the testing loss grows with fluctuations, which is in align with the pattern we observed from the testing loss and our theoretical analysis.}

\ignore{\yanc{may need a summary and future work section.}}

\section{Conclusion}
In this paper, we have presented a unified theoretical framework to analyze how the age of correlated features affects the performance in supervised learning based forecasting. It has been shown that the minimum training loss is a function of the age, which is not necessarily monotonic. Conditions have been provided under which the training loss and testing loss are approximately non-decreasing in age. Our investigation suggests that, by (i) jointly training the forecasting actions for different age values and (ii) adding the age value into the input feature, both forecasting accuracy and computational complexity can be greatly improved.

\bibliographystyle{IEEEtran}
\bibliography{refshisher1,experiments_refs}
\appendices
\section{Proof of Theorem \ref{theorem1}}\label{Ptheorem1}
Part (a): By the definitions of generalized conditional entropy and generalized mutual information, one can obtain 
\begin{align}
&H_L(Y|X_1, X_2, Z_1, Z_2) \nonumber\\
=&H_L(Y|Z_1, Z_2) - I_L(Y; X_1, X_2|Z_1, Z_2)\nonumber\\
=&H_L(Y|X_1, X_2) - I_L(Y; Z_1, Z_2|X_1, X_2),
\end{align}
which yields
\begin{align}\label{PThm}
H_L(Y|Z_1, Z_2)=H_L(Y|X_1, X_2)+&I_L(Y; X_1, X_2|Z_1, Z_2)\nonumber\\
-& I_L(Y; Z_1, Z_2|X_1, X_2).
\end{align}
Now, if we replace $Z_1, Z_2, X_1, X_2$, and $Y$ in \eqref{PThm} with $X_{1, t-k-1},  X_{2, t-\delta_2}, X_{1, t-k}, X_{2, t-\delta_2},$ and $Y_t$, respectively, then \eqref{PThm} becomes 
\begin{align}\label{PThm1}
&H_L(Y_t|X_{1, t-k-1},X_{2, t-\delta_2})\nonumber\\
=&H_L(Y_t| X_{1, t-k}, X_{2, t-\delta_2}) \nonumber\\
&+I_L(Y_t; X_{1, t-k}, X_{2, t-\delta_2}|X_{1, t-k-1},  X_{2, t-\delta_2})\nonumber\\
&- I_L(Y_t; X_{1, t-k-1}, X_{2, t-\delta_2}|X_{1, t-k},X_{2, t-\delta_2}).
\end{align}
Equation \eqref{PThm1} is valid for any value of $t$ and $k$. Therefore, taking summation of $H_L(Y_t|X_{1, t-k-1},X_{2, t-\delta_2})$ from $k=0$ to $\delta_1-1$, we get
\begin{align}\label{PThm2}
&H_L(Y_t|X_{1, t-\delta_1},X_{2, t-\delta_2})\nonumber\\
=&H_L(Y_t| X_{1, t}, X_{2, t-\delta_2}) \nonumber\\
&+\sum_{k=0}^{\delta_1-1}I_L(Y_t; X_{1, t-k}, X_{2, t-\delta_2}|X_{1, t-k-1},  X_{2, t-\delta_2})\nonumber\\
&-\sum_{k=0}^{\delta_1-1} I_L(Y_t; X_{1, t-k-1}, X_{2, t-\delta_2}|X_{1, t-k},X_{2, t-\delta_2}).
\end{align}
Similarly, we can establish that
\begin{align}\label{PThm3}
&H_L(Y_t|X_{1, t},X_{2, t-\delta_2})\nonumber\\
=&H_L(Y_t| X_{1, t}, X_{2, t}) \nonumber\\
&+\sum_{k=0}^{\delta_2-1}I_L(Y_t; X_{1, t}, X_{2, t-k}|X_{1, t},  X_{2, t-k-1})\nonumber\\
&-\sum_{k=0}^{\delta_2-1} I_L(Y_t; X_{1, t}, X_{2, t-k-1}|X_{1, t},X_{2, t-k}).
\end{align}
Combining \eqref{PThm2} and \eqref{PThm3}, we get \eqref{eMarkov}. 

Because, mutual information is a non-negative term, one can observe from \eqref{functionf_1} that for a fixed $\delta_2$, the functions $f_1(\delta_1, \delta_2)$ is a non-decreasing function of $\delta_1$. Moreover, $f_1(\delta_1, \delta_2)$ can be written in another form as 
\begin{align}\label{functionf_12}
&f_1(\delta_1, \delta_2)\! \nonumber\\
=&H_L(Y_t| X_{1,t}, X_{2,t}) \nonumber\\
& +\sum_{k=0}^{\delta_2-1} I_L(Y_t; X_{1, t-\delta_1}, X_{2, t-k} | X_{1, t-\delta_1}, X_{2, t-k-1}) \nonumber\\
& +\sum_{k=0}^{\delta_1-1} I_L(Y_t; X_{1, t-k}, X_{2, t} | X_{1, t-k-1}, X_{2, t}).
\end{align}
From the above equation, it is also observed that for a fixed $\delta_1$, the functions $f_1(\delta_1, \delta_2)$ is a non-decreasing function of $\delta_2$. Therefore, \eqref{functionf_1} and \eqref{functionf_12} imply that $f_1(\delta_1, \delta_2)$ is a non-decreasing function of $\delta_1$ and $\delta_2$. Similarly, from \eqref{functionf_2}, we can deduce that $f_2(\delta_1, \delta_2)$ is a non-decreasing function of $\delta_1$ and $\delta_2$.

Part (b): 
Because  $H_L(Y)$ is twice differentiable in $P_Y$ and $Y_t \xleftrightarrow{\epsilon} (X_{1, t-\tau_1}, X_{2, t-\tau_2}) \xleftrightarrow{\epsilon} (X_{1, t-\tau_1-\mu_1}, X_{2, t-\tau_2-\mu_2})$ is an $\epsilon$-Markov chain for all $\mu_l, \tau_l\geq 0$, by using Lemma \ref{Lemma_CMI}, $f_2(\delta_1, \delta_2)$ satisfies
\begin{align}
    f_2(\delta_1, \delta_2)&=\sum_{k=0}^{\delta_1-1} O(\epsilon^2)+\sum_{k=0}^{\delta_2-1} O(\epsilon^2)\nonumber\\
    &=O(\epsilon^2).
\end{align}
This concludes the proof.

\section{Proof of Theorem \ref{theorem2}}\label{Ptheorem2}
By using Theorem 1 and substituting Equation \eqref{eMarkov1} into \eqref{Dynamic_cod_entropy2}, we obtain
\begin{align}\label{PThm2a}
H_L(Y_t |X^m_{t-\Delta^m}, \Delta^m)=\mathbb E_{\Delta^m \sim P_{\Delta^m}}[f_1(\Delta^m)]+O(\epsilon^2).
\end{align}
The expectations in the above equation exist as entropy and conditional entropy are considered bounded and $H_L(Y_t |X^m_{t-\Delta^m}, \Delta^m) \leq H_L(Y_t)< \infty.$ Moreover, $f_1( \delta^m)$ is a non-decreasing function of $\delta^m$. Therefore, as $ \Delta^m_{c} \leq_{st} \Delta^m_{d}$, we get \cite{stochasticOrder}
\begin{align}\label{PThm2b}
    \mathbb E_{\Delta^m_c \sim P_{\Delta^m_c}}[f_1(\Delta^m_c)] \leq   \mathbb E_{\Delta^m_d \sim P_{\Delta^m_d}}[f_1( \Delta^m_d)].
\end{align}
By using \eqref{PThm2a} and \eqref{PThm2b}, we obtain
\begin{align}\label{PThm2c}
H_L(Y_t |X^m_{t-\Delta^m_c}, \Delta^m_c) \leq H_L(Y_t |X^m_{t-\Delta^m_d}, \Delta^m_d)+O(\epsilon^2).
\end{align}
This concludes the proof.

\section{Proof of Theorem \ref{theorem3}}\label{Ptheorem3}
Part (a): First, we provide the following lemma.

\begin{lemma}\label{lemma2}
 If 
    \begin{align}\label{condition2}
        D_{\chi^2}\left(P_{\tilde Y,\tilde X}|| P_{Y,X}\right) \leq \beta^2,
    \end{align}
    then 
      \begin{align}\label{result12}
        P_{\tilde X}(x)=P_{X}(x)+O(\beta), \ \forall x \in \mathcal X,
    \end{align}
    \begin{align}\label{result2}
        P_{\tilde Y|\tilde X}(y|x)=P_{Y|X}(y|x)+O(\beta), \ \forall y \in \mathcal Y, \forall x \in \mathcal X.
    \end{align}
In addition, if $H_L(\tilde Y; Y|\tilde X)$ is bounded, then
    \begin{align}\label{result3}
         H_L(\tilde Y; Y|\tilde X)=H_L(Y|X)+O(\beta).
    \end{align}
    \end{lemma}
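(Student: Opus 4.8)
The plan is to extract one $\ell_1$ estimate from the hypothesis \eqref{condition2} and cascade it through the three conclusions. First I would observe that $D_{\chi^2}(P_{\tilde Y,\tilde X}\|P_{Y,X}) \le \beta^2 < \infty$ forces $\mathrm{supp}(P_{\tilde Y,\tilde X}) \subseteq \mathrm{supp}(P_{Y,X})$, and then apply Cauchy--Schwarz (dropping the pairs where $P_{Y,X}(y,x)=0$, which then also have $P_{\tilde Y,\tilde X}(y,x)=0$):
\[
\sum_{y,x} \bigl|P_{\tilde Y,\tilde X}(y,x) - P_{Y,X}(y,x)\bigr| \le \Bigl(\sum_{y,x} \frac{(P_{\tilde Y,\tilde X}(y,x) - P_{Y,X}(y,x))^2}{P_{Y,X}(y,x)}\Bigr)^{1/2}\Bigl(\sum_{y,x}P_{Y,X}(y,x)\Bigr)^{1/2} \le \beta .
\]
Summing this bound over $y$ and using the triangle inequality yields $|P_{\tilde X}(x) - P_X(x)| \le \beta$ for every $x$, which is \eqref{result12}. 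Then \eqref{result2} follows by putting the two conditionals over a common denominator,
\[
P_{\tilde Y|\tilde X}(y|x) - P_{Y|X}(y|x) = \frac{\bigl(P_{\tilde Y,\tilde X}(y,x)-P_{Y,X}(y,x)\bigr)P_X(x) + P_{Y,X}(y,x)\bigl(P_X(x)-P_{\tilde X}(x)\bigr)}{P_{\tilde X}(x)\,P_X(x)} ,
\]
and bounding it term by term: the denominator is bounded below by a positive constant because $P_X$ and $P_{\tilde X}$ are strictly positive on the finite set $\mathcal X$ (standing assumption in Section III), while the numerator is $O(\beta)$ by the $\ell_1$ bound together with \eqref{result12}; hence the difference is $O(\beta)$ uniformly in $x,y$.

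For \eqref{result3} I would write $a_x := a_{P_{Y|X=x}}$ and combine the definition \eqref{cond_cross}--\eqref{TestError} of the generalized conditional cross entropy with the identity $H_L(Y|X=x) = \mathbb{E}_{Y\sim P_{Y|X=x}}[L(Y,a_x)]$ (which follows from \eqref{eq_cond_entropy}, since the Bayes action minimizes the expected loss) and with \eqref{eq_cond_entropy1}, giving
\[
H_L(\tilde Y;Y|\tilde X) - H_L(Y|X) = \sum_x \Bigl( P_{\tilde X}(x)\,\mathbb{E}_{Y\sim P_{\tilde Y|\tilde X=x}}[L(Y,a_x)] - P_X(x)\,\mathbb{E}_{Y\sim P_{Y|X=x}}[L(Y,a_x)] \Bigr).
\]
The hypothesis that $H_L(\tilde Y;Y|\tilde X)$ is bounded, the support containment from the first step, and the finiteness of $\mathcal X\times\mathcal Y$ together give a uniform bound $|L(y,a_x)| \le C$ over every pair $(y,x)$ carrying positive mass under $P_{\tilde Y,\tilde X}$ or $P_{Y,X}$. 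I would then add and subtract $P_X(x)\,\mathbb{E}_{Y\sim P_{\tilde Y|\tilde X=x}}[L(Y,a_x)]$ inside each summand, splitting the difference into a piece $(P_{\tilde X}(x)-P_X(x))\,\mathbb{E}_{Y\sim P_{\tilde Y|\tilde X=x}}[L(Y,a_x)] = O(\beta)$ (by \eqref{result12}) and a piece $P_X(x)\sum_y (P_{\tilde Y|\tilde X}(y|x)-P_{Y|X}(y|x))\,L(y,a_x) = O(\beta)$ (by \eqref{result2}); summing over the finitely many $x\in\mathcal X$ gives \eqref{result3}. Equivalently, since $P_{\tilde X}(x)P_{\tilde Y|\tilde X}(y|x)=P_{\tilde Y,\tilde X}(y,x)$, the whole difference collapses to $\sum_{y,x}(P_{\tilde Y,\tilde X}(y,x)-P_{Y,X}(y,x))\,L(y,a_x)$, which is bounded directly by $C\beta$ via the $\ell_1$ estimate.

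The only genuinely delicate point --- and the main obstacle --- is the boundedness bookkeeping in the last step: one must ensure $L(y,a_x)$ is finite, hence uniformly bounded over the finite alphabets, on every pair that has positive probability under either distribution. This is exactly where the finiteness of the $\chi^2$-divergence (confining $\mathrm{supp}(P_{\tilde Y,\tilde X})$ inside $\mathrm{supp}(P_{Y,X})$, where $L(y,a_x)<\infty$ by the standing bounded-entropy assumption) and the ``bounded testing loss'' hypothesis both enter; everything else is Cauchy--Schwarz and finite-sum estimates.
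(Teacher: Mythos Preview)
Your argument is correct and complete; the bookkeeping you flag at the end is handled just as you describe. It is, however, organized differently from the paper's proof.

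The paper stays in $\chi^2$-divergence throughout. For \eqref{result12} it invokes the data-processing-type inequality $D_{\chi^2}(P_{\tilde X}\|P_X)\le D_{\chi^2}(P_{\tilde Y,\tilde X}\|P_{Y,X})\le\beta^2$ and reads off the pointwise bound. For \eqref{result2} it substitutes \eqref{result12} back into the expanded joint $\chi^2$ sum to isolate, for each fixed $x$, a conditional $\chi^2$-type bound $\sum_y (P_{\tilde Y|\tilde X}(y|x)-P_{Y|X}(y|x))^2/P_{Y|X}(y|x)\le O(\beta^2)/P_X(x)$, and then extracts the pointwise $O(\beta)$ via a square-root representation. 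For \eqref{result3} it works per $x$: it shows $H_L(\tilde Y;Y|\tilde X=x)-H_L(Y|X=x)=\sum_y(P_{\tilde Y|\tilde X}(y|x)-P_{Y|X}(y|x))L(y,a_x)=O(\beta)$ from \eqref{result2}, and then averages over $x$ using \eqref{result12}. By contrast, you convert the hypothesis once into the total-variation bound $\|P_{\tilde Y,\tilde X}-P_{Y,X}\|_1\le\beta$ via Cauchy--Schwarz and let that single $\ell_1$ estimate drive all three conclusions; your final collapse of $H_L(\tilde Y;Y|\tilde X)-H_L(Y|X)$ to $\sum_{y,x}(P_{\tilde Y,\tilde X}-P_{Y,X})L(y,a_x)$ is in fact shorter than the paper's two-stage averaging. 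The trade-off is that the paper's route keeps the local information-geometric flavor (everything expressed through $\chi^2$ balls, matching the $\epsilon$-Markov machinery), while yours is more elementary and makes the constants more transparent.
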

\begin{proof}
See Appendix \ref{Plemma2}.
\end{proof}
Using Lemma \ref{lemma2}, we prove part (a) of Theorem \ref{theorem3}. Because the condition \eqref{T3condition2} holds and the testing loss is assumed bounded, if we replace $X,~Y,~\tilde X$, and $\tilde Y$ in \eqref{result3} with $(X^m_{t-\Delta^m},~ \Delta^m),~ Y_t, ~(\tilde X^m_{t-\tilde \Delta^m}, \tilde \Delta^m)$, and $\tilde Y_t$, then \eqref{result3} becomes \eqref{Eq_Theorem3a}.

Part (b): By using Theorem \ref{theorem2} and \eqref{Eq_Theorem3a}, we get
 \begin{align}
        H_L(\tilde Y_t; Y_t| \tilde X^m_{t-\tilde \Delta^m_a}, \tilde \Delta^m_a) \leq& H_L(\tilde Y_t; Y_t| \tilde X^m_{t-\tilde \Delta^m_b}, \tilde \Delta^m_b) \nonumber\\
        &+O(\beta)+O(\epsilon^2),
    \end{align}
 where $O(\beta)+O(\epsilon^2)=O(\max\{\epsilon^2, \beta\})$. This concludes the proof.
 
 \section{Proof of Lemma \ref{lemma2}}\label{Plemma2}
Because 
\begin{align}
    D_{\chi^2}(P_{\tilde Y, \tilde X} || P_{Y,X}) \geq D_{\chi^2}(P_{\tilde X} || P_{X}),
\end{align}
from \eqref{condition2}, we have
\begin{align}\label{condition1}
    D_{\chi^2}(P_{\tilde X} || P_{X}) \leq \beta^2.
\end{align}
 By using the definition of Neyman's $\chi^2$-divergence, from inequality \eqref{condition1}, we can show \eqref{result12}. Now, from inequality \eqref{condition2}, we get
\begin{align}\label{recondition2}
    \sum_{\substack{x \in \mathcal X \\ y \in \mathcal Y}} \frac{\left[P_{\tilde Y|\tilde X}(y|x)P_{\tilde X}(x)-P_{Y| X}(y|x)P_{X}(x)\right]^2}{P_{Y| X}(y|x)P_{X}(x)} \leq \beta^2.
\end{align}
Substituting \eqref{result12} into the above inequality and after some algebraic operations, we have
\begin{align}
    \sum_{\substack{x \in \mathcal X \\ y \in \mathcal Y}} P_X(x) \frac{\left[P_{\tilde Y|\tilde X}(y|x)-P_{Y| X}(y|x)\right]^2}{P_{Y| X}(y|x)} \leq O(\beta^2).
\end{align}
From \eqref{recondition2}, it is simple to observe that $P_{Y|X}(y|x) \neq 0$ and $P_{X}(x) \neq 0$ for all $x \in \mathcal X$ and $y \in \mathcal Y$. Because $P_X(x) \neq 0,~ \forall x \in \mathcal X$, we can further obtain
\begin{align}\label{Thm3a}
     \sum_{y \in \mathcal Y} \frac{\left[P_{\tilde Y|\tilde X}(y|x)-P_{Y| X}(y|x)\right]^2}{P_{Y| X}(y|x)} \leq \dfrac{O(\beta^2)}{P_{X}(x)}, \forall x \in \mathcal X.
\end{align}
An equivalent representation of the above inequality is that there exists a function $C: \mathcal Y \mapsto \mathbb R$ such that for all $x \in \mathcal X$ and $y \in \mathcal Y$,
\begin{align}\label{Thm3b}
    P_{\tilde Y| \tilde X}(y|x)-P_{Y| X}(y|x)=&\dfrac{O(\beta)}{\sqrt{P_{X}(x)}} C(y) \sqrt{P_{Y| X}(y|x)} \nonumber\\
    =& O(\beta), 
\end{align}
where 
\begin{align}
    \sum_{y \in \mathcal Y} C^2(y) \leq 1.
\end{align}
Equation \eqref{Thm3b} implies \eqref{result2}.

Define a function $g(P_{\tilde Y|\tilde X=x})$ as 
\begin{align}\label{Thm3c}
    g(P_{\tilde Y|\tilde X=x})=&H_L(\tilde Y; Y|\tilde X=x)-H_L(Y|X=x) \nonumber\\
    =&\mathbb E_{\tilde Y \sim P_{\tilde Y|\tilde X=x}}[L(Y, a_{P_{Y|X=x}})]-H_L(Y|X=x)].
\end{align}
In this lemma, we assume that conditional cross entropy $H_L(\tilde Y; Y|\tilde X)$ is bounded. Also, conditional entropy $H_L(Y|X)$ is considered bounded in this paper. Thus, $g(P_{\tilde Y|\tilde X=x})$ is bounded for all $x \in \mathcal X$.
By the definition of conditional entropy and conditional cross entropy, we obtain
\begin{align}\label{Thm3d}
    &g(P_{\tilde Y|\tilde X=x})\nonumber\\
    =&\sum_{y\in \mathcal Y} \left(P_{\tilde Y|\tilde X}(y|x)-P_{Y|X}(y|x)\right)L(y, a_{P_{Y|X=x}})\nonumber\\
    =&O(\beta).
\end{align}
The last equality is obtained by substituting the value of $P_{\tilde Y|\tilde X}(y|x)$ from \eqref{result2}. Substituting \eqref{Thm3c} into \eqref{Thm3d}, we get 
\begin{align}\label{Thm3e}
    H_L(\tilde Y; Y|\tilde X=x)=H_L(Y|X=x)+O(\beta).
\end{align}
By using \eqref{result12} and \eqref{Thm3e}, yields
\begin{align}\label{Thm3g}
  H_L(\tilde Y; Y|\tilde X)=& \sum_{x \in \mathcal X} P_{\tilde X}(x)H_L(\tilde Y; Y|\tilde X=x)\nonumber\\
  =&\sum_{x \in \mathcal X} \left(P_{X}(x)+O(\beta)\right)H_L(\tilde Y; Y|\tilde X=x)\nonumber\\
  =&\sum_{x \in \mathcal X} P_{ X}(x)H_L(\tilde Y; Y|\tilde X=x)+O(\beta)\nonumber\\
  =&\sum_{x \in \mathcal{X}}P_{ X}(x) \left(H_L(Y|X=x)+O(\beta)\right)+O(\beta)\nonumber\\
  =&\sum_{x \in \mathcal{X}}P_{ X}(x)H_L(Y|X=x)+O(\beta)\nonumber\\
  =&H_L(Y|X)+O(\beta).
\end{align}
This concludes the proof.
\end{document}